\newtheorem{theorem}{Theorem}
\newtheorem{corollary}{Corollary}
\newtheorem{lemma}{Lemma}
\newtheorem{definition}{Definition}
\journal{Neural Networks}
\begin{document}

\begin{frontmatter}

\title{Generating Post-hoc Explanations for Skip-gram-based Node Embeddings by Identifying Important Nodes with \textit{Bridgeness}}

\author[label1]{Hogun Park\corref{cor1}%
\fnref{fn1}}
\ead{hogunpark@skku.edu}

\author[label2]{Jennifer Neville}
\ead{neville@cs.purdue.edu}

\cortext[cor1]{Corresponding author.}
\address[label1]{Sungkyunkwan University, Republic of Korea}
\address[label2]{Purdue University and Microsoft Research, USA}



\begin{abstract}
Node representation learning in a network is an important machine learning technique for encoding relational information in a continuous vector space while preserving the inherent properties and structures of the network. Recently, \textit{unsupervised} node embedding methods such as DeepWalk \citep{deepwalk}, LINE \citep{line}, struc2vec \citep{struc2vec}, PTE \citep{pte}, UserItem2vec \citep{wu2020multi}, and RWJBG \citep{li2021random} have emerged from the Skip-gram model \citep{word2vec} and perform better performance in several downstream tasks such as node classification and link prediction than the existing relational models. However, providing post-hoc explanations of Skip-gram-based embeddings remains a challenging problem because of the lack of explanation methods and theoretical studies applicable for embeddings. In this paper, we first show that global explanations to the Skip-gram-based embeddings can be found by computing \textit{bridgeness} under a spectral cluster-aware local perturbation. Moreover, a novel gradient-based explanation method, which we call GRAPH-wGD, is proposed that allows the top-$q$ global explanations about learned graph embedding vectors more efficiently. Experiments show that the ranking of nodes by scores using GRAPH-wGD is highly correlated with true \textit{bridgeness} scores. We also observe that the top-$q$ node-level explanations selected by GRAPH-wGD have higher importance scores and produce more changes in class label prediction when perturbed, compared with the nodes selected by recent alternatives, using five real-world graphs
\end{abstract}

\begin{keyword}
Node representation learning \sep Explanation

\end{keyword}

\end{frontmatter}


\section{Introduction}

Node embedding has garnered considerable research interest in recent years. The fundamental problem is to find a low-dimensional representation while preserving the inherent relational properties in an embedding space (i.e., if neighbors of two vertices are similar in the original network, they should have similar embedding vectors). Earlier works have demonstrated that in addition to pairwise edges, high-order proximities and structural similarities between nodes are important in capturing the underlying structure of the network \citep{deepwalk,line}. In particular, the Skip-gram model \citep{mikolov2013distributed} in natural language processing (NLP) has been actively extended to several node embedding methods such as DeepWalk \citep{deepwalk}, node2vec \citep{node2vec}, struc2vec \citep{struc2vec}, and RWJBG \citep{li2021random}. Although the Skip-gram-based node embedding methods have increased the performance of various downstream tasks such as node classification and link predictions, it is still difficult to interpret the model results.

\textcolor{black}{The issue of interpretability has been a concern among researchers, leading to the development of new methods aimed at providing explanations based on salient features \citep{lime,lundberg2017unified,ribeiro2018anchors}, increasing trust in AI systems \citep{calisto2022breastscreening,calisto2022modeling,calisto2021introduction,calisto2020breastscreening,calisto2017towards}, or making the models more transparent \citep{Chu2018}. For instance, in the medical field, heatmap-based explanations for BreastScreening-AI systems \cite{calisto2022breastscreening,calisto2022modeling} have been shown to increase clinician trust in the AI system. The explanations highlight the circularity of masses or the distribution of microcalcifications using different colors in X-ray or MRI images.}
However, the most recent methods have focused on providing explanations for models on simple sequenced or grid inputs. Recently, the explainability of graph neural networks (GNNs) \citep{gcn,wu2020comprehensive,wu2022semi,gnnnn1,gnnnn2,gnnnn3} has been explored \citep{gnnexplainer, Luo2020, pgm, subgraphx, huang2022graphlime,luo2020parameterized,lucic2022cf}, but these studies are limited to understanding supervised GNNs. For example, GNNExplainer \citep{gnnexplainer} was developed to provide local explanations for GNNs that were trained for node-classification tasks. They suggested the notion of importance by leveraging the similarity between the new prediction under a candidate subgraph (i.e., explanation) and the original prediction using mutual information. Therefore, it is not directly applicable to {\em unsupervised} node embedding models such as DeepWalk and LINE. In this paper, we propose a novel explanation method for finding globally important nodes using learned Skip-gram-based network embeddings. We formalize a node-level global explanation in embeddings and show that node-level explanations of several Skip-gram-based embedding models such as LINE, PTE, and struc2vec, are related to \textit{bridgeness} under spectral cluster-aware local perturbation. However, capturing of clusters is computationally inefficient, and a novel gradient-based explanation method, which we call GRAPH-wGD, is proposed to find the top-$q$ global explanations of learned graph embeddings.

\begin{figure*}[!t]
     \centering
     \includegraphics[width=0.99\textwidth]{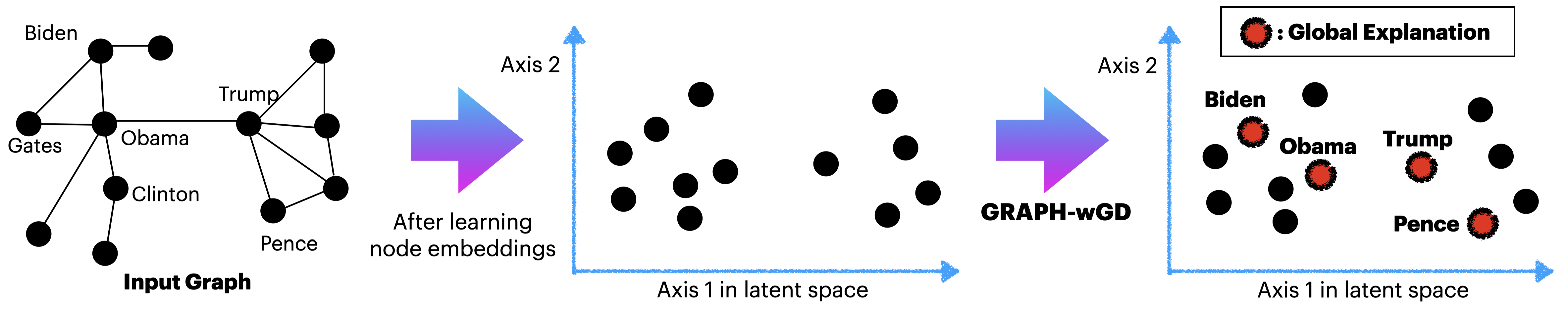}
     \caption{Illustration of GRAPH-wGD. GRAPH-wGD provides a global explanation for learned DeepWalk embeddings. (Best Viewed in Color.)}
	 \label{fig:obama:vis}
\end{figure*}

For example, in Figure \ref{fig:obama:vis}, an input social graph and its DeepWalk embeddings are visualized in the left two sub-figures. GRAPH-wGD can provide a global explanation to the learned DeepWalk embeddings. It generates an explanation by identifying a set of important nodes that are most influential to the embedding. In the example social graph, DeepWalk learns two social parties around Obama and Trump, and GRAPH-wGD provides four nodes that are the most influential to the learned embeddings. The explanation makes the learned embedding more interpretable and provides useful clues about whether the learned embeddings are trustable or not. For several other time-critical and cost-intensive node embedding-based applications in biology \citep{nelson2019embed} and medicine \citep{zitnik2019machine}, providing interpretable explanations can also be potentially valuable.

In the experiment, we evaluate GRAPH-wGD over five different real-world graphs and compare it with several alternatives. The results show that the node-level explanations identified by GRAPH-wGD are correlated with \textit{bridgeness} and that the top identified nodes have greater effects on neighbors and produce larger changes in prediction probability when they are perturbed.

\paragraph{Problem Definition}
Let $W \in \mathbb{R}^{|V| \times k}$ be a fixed $k$-dimensional node embedding computed for a graph $G$ with $V$ vertices and $E$ edges using model $\mathcal{M}$, with parameters $\theta_{\mathcal{M}}$ and loss function $\mathcal{L}_{\mathcal{M}}$. Our goal is to {\em score} each $v_j \in V$ with respect to (w.r.t.) their {\em importance} in $W$ and identify the nodes having the greatest global effect on the learned embedding $W$\footnote{Note that although some embedding methods are unstable (i.e., node locations oscillate during learning), our goal is to score the nodes conditioned on their locations in the final embedding.} as \textit{post-hoc node-level global explanations}.

\paragraph{Organization}
First, previous approaches are discussed in Section \ref{sec:relatedwork}. Section \ref{sec:bridgeness} provides some preliminaries, including Skip-gram-based node embedding models and our node-level explanation method. We found that the node-level explanations were theoretically related to bridge nodes that connect across different clusters in the embedding space. To efficiently capture the nodes that have high \textit{bridgeness} scores (i.e., to obtain the top-$q$ \textit{node-level global explanations}), a gradient-based explanation model, GRAPH-wGD, is proposed in Section \ref{sec:graphgd_wgd}, and its theoretical relationship to \textit{bridgeness} is described in Section \ref{sec:ta}. Our experimental evaluation is reported in Section \ref{sec:exp}. Finally, Section \ref{sec:conclusion} concludes our paper.


\newpage
\section{Related Work}\label{sec:relatedwork}

\subsection{Skip-gram-based Node Embedding Methods}

\textcolor{black}{
In the field of natural language processing (NLP), the Skip-gram model~\citep{word2vec} has been proposed for generating low-dimensional embeddings for text data efficiently. This architecture arranges semantically similar words close together, and has been effectively used in many NLP tasks. The Skip-gram architecture was later applied to learning node representations by DeepWalk~\citep{deepwalk}. It uses random walks to generate input sequences for learning and obtains low-dimensional node representations in a graph. Nodes with similar random walks tend to have similar positions and similar embeddings. This idea was further developed by node2vec~\citep{node2vec} which uses second-order random walks and outperforms existing baselines in node classification and link prediction tasks.
Furthermore, the cACP-DeepGram method~\citep{cACPDeepGram} utilizes the FastText-based word embedding technique to create a low-dimensional representation for each peptide. This representation is then utilized as a descriptor by a deep neural network model to accurately differentiate between ACPs. FastText extends the traditional Skip-gram model by incorporating $n$-gram features while retaining the original optimization framework. Despite the popularity of Skip-gram based node embedding models in various downstream tasks, it remains a challenge to interpret the learned embeddings.}

\textcolor{black}{
Recently, researchers have extensively analyzed the Skip-gram architecture to better understand it. GMF-FE~\citep{zhu2021free} generalizes the Skip-gram learning method by using a free energy distance function and matrix factorization. The method demonstrates that Skip-gram embeddings can be interpreted as an implicit similarity matrix that is learned with sampling under free energy distance theory and with additional scaling parameters. Despite GMF-FE's effort to map the Skip-gram learning to a matrix factorization problem, it is still challenging to understand which nodes contribute to the Skip-gram-based embeddings. In particular, the node representations in the latent factor space are not useful for providing explanations for the recommendations. It is also important to note that the matrix factorization optimization problem is non-convex~\citep{koren2009matrix}, meaning that the most similar nodes to a given node may not necessarily be the neighbors in the latent space. 
Similarly, GANRL~\citep{wu2021unified} aims to propose a unified generative adversarial learning framework for learning Skip-gram network representation methods. This approach improves the performance of Skip-gram network representation learning by obtaining more appropriate proximity scores through adversarial learning methods. However, it is still not possible to estimate which nodes contribute most to the learning. Recently, ASK~\citep{hsieh2022toward} has been proposed as a solution to the issue of needing good random samples for learning the Skip-gram model. Instead, it uses the $k$-most important neighbors to select positive samples and leverages their personalized PageRank (PPR) scores to update the original objective functions. The method has been shown to perform better in link prediction and node classification tasks without the need for expensive negative sampling, but it is still limited in terms of providing post-hoc explanations for the learned embeddings. Although the above methods have focused on improving the performance and understanding of the Skip-gram architecture, the ability to interpret the learned embeddings is still limited.}

\subsection{\textcolor{black}{Explainable Artificial Intelligence (XAI) Methods for Graphs}}

Previous approaches such as those of \citep{Fong2017} and \citep{Li2016} directly evaluated the importance of the discovered features by measuring the difference in prediction scores after perturbation of the features. Formally, the importance function $g$ can be formulated as 
      \begin{equation}
          g(q_j | \textsc{x}, c) = p(\hat{y}=c | \textsc{x})  - p(\hat{y}=c | \textsc{x}_{-q_j}),
          \label{equ:greadyalg}
      \end{equation}
            
\noindent where $p(\hat{y}=c | \textsc{x})$ is a predicted probability from a learned model $\mathcal{M}$, given that the input feature $\textsc{x}$ for a class label $c$ and $p(\hat{y}=c | \textsc{x}_{-q_j})$ is the new prediction after the perturbation of a partial feature $q_j$. We call this approach a \textit{Greedy} method for comparison in our experiments. However, some predictions (e.g., softmax outputs) are often not calibrated well \citep{guo2017calibration} and are not applicable to representation learning methods such as node-embedding models.

Recently, more-accurate model-agnostic methods have also been proposed to develop feature importance scores to identify locally important features for the given input instance. For example, in LIME \citep{lime} an interpretable ML model was proposed that could identify locally important features by a local perturbation approach. However, it is not applicable to graph-based node-embedding models, which are unsupervised learning models. To understand predictions of the neural networks, several recent works (e.g., \citealp{simonyan2013deep,sundararajan2017axiomatic}) have leveraged gradients to find feature attributions (or saliency maps). However, they are also inapplicable to node-embedding models (e.g., DeepWalk and LINE) owing to their sampling-based optimization, and the backpropagation-based methods are not tractable to process all complex combinations of nodes/paths.


    Degree and PageRank are basic network statistics to obtain the graph-wise importance of global nodes, but they are not post-hoc methods. In particular, although random walk-based node embeddings such as DeepWalk, heavily rely on high-degree nodes, high-degree nodes are not guaranteed to have the highest global effect on the learned model. For example, in the worst case, if the high-degree nodes are placed in a single cluster, the perturbation on the nodes may not considerably change label prediction or clustering in the embedding space. Regarding \textit{post-hoc explanations} for graph-based ML models, recent explanation models for GNNs~\citep{gcn,wu2020comprehensive,wu2022semi,gnnnn1,gnnnn2,gnnnn3} using gradient \citep{Pope2019}, decomposition \citep{Schnake2020}, surrogate \citep{graphlime,pgm}), and perturbation \citep{gnnexplainer,Luo2020,Schlichtkrull2020} have been proposed. However, all these approaches assume that their target models are supervised GNN models and are not applicable to unsupervised node-embedding models. For example, GNNExplainer \citep{gnnexplainer}, SubgraphX \citep{subgraphx}, CP-Explainer \citep{park2022providing}, and GraphMask \citep{Schlichtkrull2020} leverage mutual information, Shapley values, conformal prediction, and divergence, respectively, using both the original and affected label predictions, which are predicted on the candidate explanation subgraph. 
    They cannot provide global explanations for the node embedding. Moreover, \citep{huang2022graphlime,luo2020parameterized,pgm,lucic2022cf} have recently proposed generating explanations for supervised GNN models, but these methods cannot be applied to unsupervised node embedding. The taxonomy induction method \citep{Liu2018} and XGNN \citep{xgnn} are also relevant to this work. However, the taxonomy induction method interprets node embeddings only by identifying clusters in the embedding space and XGNN provides a model-level interpretation with graph generation, which still assumes that its target model is a supervised model. Thus, they do not quantify the global effect of each node for unsupervised node-embedding models. Meanwhile, a supervised importance score regression model, GENI \citep{geni}, has also been proposed. Although our method requires a set of scored node sets, it does not need any importance labels.


\vspace{-1mm}
\section{Explanation and the Effect of High Bridgeness in Node Embeddings}\label{sec:bridgeness}

\subsection{Skip-gram-based Node Embedding}

Several node-embedding problems including Skip-gram-based embedding models can be formulated as learning a function $f_{\mathcal{M}}: V \rightarrow \mathbb{R}^k$, which is a mapping function from nodes to feature representations given a graph $G = (V, E)$. Here, $\mathcal{M}$ denotes an embedding model and $k$ is a parameter specifying the number of feature representation dimensions. Equivalently, $W$ is a matrix form of $f_{\mathcal{M}}$ and of size $|V| \times k$ parameters. For every source node $v_b \in V$, we define $\mathcal{N}_G(v_b) \subset V$ as a network neighborhood of node $v_b$. For example, DeepWalk \citep{deepwalk} optimizes the following objective function, which maximizes the log-probability of observing a network neighborhood $\mathcal{N}_G(v_b)$ for a node $v_b$:
\begin{equation}
    \resizebox{0.34\hsize}{!}{%
        $\text{arg}\!\,\max\limits_{f_{\mathcal{M}}} \sum_{v_b \in V} \log Pr(\mathcal{N}_G(v_b) | f_{\mathcal{M}}(v_b)).$
    }
\end{equation}

\label{eq:deepwalkloss}

\noindent For learning the abovementioned function, pairwise learnings such as hierarchical softmax or negative sampling \citep{word2vec}  using a sampled set of $\{(v_b ,v_i), v_i\in \mathcal{N}_G(v_b))\}$ for $\forall v_b \in V$ are often chosen.
\vspace{-2mm}

\subsection{Explanation for Node Embedding} \label{subsec:nodeimpdef}
To measure the effect of each node $v_b \in V$ in any learned latent representation $W$, we first define \textit{node-level explanation}.

\begin{definition}[Node-level explanation] Let $G$ be an input graph with vertices $V$ and $G'_{-v_b}$ be a perturbed version of $G$ after rewiring edges of $v_b$. Here, $W \in \mathbb{R}^{|V| \times k}$ and ${W'}_{-v_b} \in \mathbb{R}^{|V| \times k}$ are latent representations over $G$ and ${G'}_{-v_b}$, respectively, and 
$\mathcal{N}_W(v_b|m)$  returns the $m$-nearest neighbors of $v_b$ in the latent space $W$. 
The \textbf{node-level explanation} $v_b$ for the learned $W$ can be found as follows:

\vspace{-2mm}
\begin{equation}
    \resizebox{0.54\hsize}{!}{%
        $\text{arg}\!\,\max\limits_{v_b \in V} Imp(v_b| W, W'_{-v_b}) =  \frac{1}{|V|} \sum_{v_i \in V} \left( 1-\frac{|\mathcal{N}_W(v_i|m) \cap \mathcal{N}_{W'_{-v_b}}(v_i|m)|}{\left| \mathcal{N}_W(v_i|m)\right|} \right)$.
     }
\label{equ:newnodeimpscore}    
\end{equation}

\label{def:newnodeimp}
\end{definition}
\vspace{-2mm}

\noindent  \textcolor{black}{The node importance metric to the embedding, referred to as \textit{Imp}, is calculated as the average proportion of neighbors that change in the embedding when the edges of node $v_b$ are perturbed. Although it is a useful measure for several unsupervised learning tasks that rely on $m$-nearest neighbors, such as local clustering, link prediction, and classification, it has some limitations. One of these limitations is the difficulty in generalizing the metric due to the presence of the hyper-parameter $k$.  Additionally, the embeddings can be affected by spurious information from nodes with different degrees, and it is necessary to mitigate this noise with minimal deviation techniques such as normalizations~\citep{dunn1974well,koren2003spectral}. To address these challenges, we propose a degree-normalized node-level explanation that measures the changes in pairwise distances (PD) across all nodes.}
\vspace{-1mm}

\begin{definition}
[(\textcolor{black}{Degree-normalized}) Node-level Explanation] Let $G$ be an input graph with vertices $V$ and $G'_{-v_b}$ be a perturbed version of $G$ after rewiring the edges of $v_b$. Here, $d_b$ represents the degree of node $v_b \in V$. Let $W \in \mathbb{R}^{|V| \times k}$ and ${W'}_{-v_b} \in \mathbb{R}^{|V| \times k}$ be latent representations learned from $G$ and ${G'}_{-v_b}$, respectively.

\vspace{3mm}
\resizebox{0.98\hsize}{!}{%
\begin{minipage}{\linewidth}
    \begin{equation}
        \begin{aligned}
            &\text{arg}\!\,\max\limits_{v_b \in V} \widehat{\text{Imp}}\left(v_b|G, {G'}_{-v_b}, W,{W'}_{-v_b}\right) = ||PD({G, W}) - PD(G'_{-v_b}, W'_{-v_b})||\\
            &\;\;\;= \left|\left|\sum_{v_i, v_j \in V} {\left|\left| \frac{\vec{w}_{i}}{\sqrt{d_i}} - \frac{\vec{w}_{j}}{\sqrt{d_j}} \right|\right|^2}     - \sum_{v_i, v_j \in V}{\left|\left| \frac{\vec{w}'_{i}}{\sqrt{d_i}} - \frac{\vec{w}'_{j}}{\sqrt{d_j}}\right|\right|^2}\right|\right| \\
        \end{aligned}
    \end{equation}
\end{minipage}    
}
\label{def:pairwisenodeimp}
\end{definition}

\noindent This version of \textit{Imp} above measures how the sum of pairwise distances is changed after perturbation of $v_b$. Compared to Def. \ref{def:newnodeimp}, it is more computationally expensive to compute ($O({|V|}^2$) but helps to have better theoretical understanding. We note that, for identifying globally important nodes w.r.t. $W$, we measure importance scores for all nodes and return the nodes with the highest score. 

To calculate $\widehat{Imp}$, we define a degree/volume-preserving perturbation method by using Def. \ref{def:perturb} below.

\begin{definition}[Degree/volume-preserving cluster-aware perturbation] Let $G$ be an input graph with vertices $V$, and $A$ be the adjacency matrix of $G$. Given a graph $G$ and its set of (arbitrary) clusters $C=\{C_1, ..., C_k\}$, we define a function {\em Perturb}($G$, $v_b$, $\alpha$) = $G'_{-v_b}$ to perturb inter-cluster edges around $v_b \in V$ with the perturbation ratio $\alpha$,  as with the following steps.:
\vspace{-2mm}
    \begin{enumerate}
                \item {\small Set $V_{cand} =\{v_i | v_i \in C_i, A[b,i] >0 \}$;}
                \item {\small Sample $Q$, which is composed of $\alpha \cdot |V_{cand}|$ vertices, from $V_{cand}$;}
                \item {\small For all $v_q \in Q$, // Remove inter-cluster edge $(v_b,v_q)$ and $(v_q, v_b)$, adjust by adding self-edges} \\
                    {\small $A[q,q] = A[q,q] + A[b,q]$, $A[b,b] = A[b,b] + A[b,q]$}\\            
                    {\small $A[b,q] = 0$, $A[q,b] = 0$}
                \item {\small Return $G'_{-v_b}=A$.}
    \end{enumerate}
    \vspace{-2mm}
\label{def:perturb}    
\end{definition}

\noindent Because the graph $G$ is undirected, by having $A[b,q]$=0, we can also set $A[q,b]=0$ and the additions to $A[q,q]$ and $A[b,b]$ do not change the degrees of $v_q$ and $v_b$. Although similar degree-preserving perturbations have been proposed by \citep{zitnik2018prioritizing}, they are global perturbation approaches, which take $O(|E|)$ time complexity, and the overall cluster memberships are likely to be changed. By using the strategy in definition \ref{def:perturb}, we have a new graph $G'_{-v_b}$, which still has the same degrees $D$ and volume of $G$. We note that our \textit{Perturb}($G$, $v_b$, $\alpha$) corresponds to $O(|d_{max}|)$, where $d_{max}$ is the maximum node degree.

\vspace{-2mm}

\subsection{Bridgeness and Node-level Explanation for Spectral Embeddings}

We initially show the relationship between node-level explanation (as in definition \ref{def:pairwisenodeimp}) and the latent representation returned from spectral embedding. First, we include the formal definition of a \textit{bridge node} \citep{wang2011identifying,jensen2015detecting}. In agreement with earlier findings in \citep{10.1371/journal.pone.0012528,batada2006stratus}, we assume that bridge nodes have more inter-modular positions than community cores. The existence of bridge nodes often leads to more inter-cluster edges. The {\em bridge node} connects the clusters and can enhance the integration of the whole network.

\begin{definition}[Bridge node and bridgeness \citep{ghalmane2019immunization}] Given an arbitrary cluster set $C=\{C_1, ..., C_k\}$ in $G$, when a node $v_b \in C_b$ connects at least two different clusters including $C_b$, it is called a \textit{bridge node}. The \textbf{bridgeness} of $v_b$ is as follows:
	\vspace{2mm}
    \begin{equation}
        Bridgeness(v_b \in C_b| C, G) = \gamma (C_b) \cdot d^{inter}_{b},
        \label{equ:bridgeness}
    \end{equation}
    \label{def:bridgenode:bridgeness}  
\noindent where $d^{inter}_{b}$ is the inter-cluster degree of node $v_b \in C_b$, which is computed from $G$ and $C$. Here, $\gamma(C_b)$ represents the importance of clusters $C_b$. For $\gamma (C_b)$, the number of neighboring clusters to $C_b$ is used in \citep{ghalmane2019immunization}, and the distance to each cluster is used \citep{jensen2015detecting,rao1982diversity}. 
In this paper, we set $\gamma(C_b)=1$ for simplicity.
\end{definition}

Now we can derive which node is a \textit{node-level explanation} w.r.t. a spectral embedding (eigenvector $U$ from $D^{-1/2}AD^{-1/2}$), where $D$ is the degree matrix of $A$.

\begin{theorem}
Let $\{C_1, ..., C_k\}$  be $k$ clusters from spectral clustering of graph $G$, with nodes $V$. Let $G'_{-v_j}$ be a perturbed version of $G$ using Perturb($G$, $v_j$, $\alpha\!=\!1$) (definition \ref{def:perturb}). Let  $U \in \mathbb{R}^{|V| \times k}$ and $U'_{-v_j} \in \mathbb{R}^{|V| \times k}$ be the eigenvectors of $G$ and ${G'}_{-v_j}$, respectively. Then, \\
\begin{equation*}
\resizebox{0.58\hsize}{!}{%
    $\text{arg}\!\,\max\limits_{v_j \in V} \widehat{Imp}(v_j|G, {G'}_{-v_j}, U,{U'}_{-v_j}) = \text{arg}\!\,\max\limits_{v_j \in V} Bridgeness(v_j|C, G)$.
}
\end{equation*}
\label{theorem:eigenvec:bridge}
\end{theorem}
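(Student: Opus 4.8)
The plan is to show that $\widehat{Imp}(v_j)$ responds to $\text{Perturb}(G,v_j,1)$ only through a perturbation matrix whose magnitude is governed by $d^{inter}_j$, so that the importance becomes a monotone function of the inter-cluster degree. Since $PD$ is summed over \emph{all} pairs, it depends on the graph only through the embedding, and I would first reduce it with the identity $\sum_{i,j}\|x_i-x_j\|^2 = 2|V|\sum_i\|x_i\|^2-2\|\sum_i x_i\|^2$ applied to $x_i=\vec{u}_i/\sqrt{d_i}$. The top eigenvector $\vec{u}_{\cdot 1}\propto D^{1/2}\mathbf{1}$ makes the first embedding coordinate constant and therefore invisible to every pairwise difference, so $\widehat{Imp}(v_j)=|PD(G,U)-PD(G'_{-v_j},U'_{-v_j})|$ is entirely a function of how the remaining eigenvectors move from $U$ to $U'_{-v_j}$.

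Next I would localize that movement. Because $\text{Perturb}(G,v_j,1)$ preserves $D$ and only redistributes $A$ — the added self-loops contribute nothing to any pairwise distance — the change is captured by $\Delta=D^{-1/2}(A'_{-v_j}-A)D^{-1/2}$, whose nonzero entries are exactly $-1/\sqrt{d_jd_q}$ on the $d^{inter}_j$ cut edges $(v_j,v_q)$ together with compensating diagonal terms $d^{inter}_j/d_j$ and $1/d_q$. Hence both the support and the Frobenius norm of $\Delta$ scale with $d^{inter}_j$. Writing $U'_{-v_j}=U+\delta U$ and invoking first-order (Davis--Kahan / resolvent) eigenvector perturbation theory gives $\delta U=O(\|\Delta\|)=O(d^{inter}_j)$, so substituting into the reduced $PD$ expresses $\widehat{Imp}(v_j)$, to leading order, as a linear functional of $\Delta$ of magnitude proportional to $d^{inter}_j$.

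To fix the sign and monotonicity I would use the variational meaning of $U$: the spectral embedding minimizes the normalized-cut energy $\mathrm{Tr}(U^\top L_{\mathrm{sym}}U)=\sum_{(i,j)\in E}\|\vec{u}_i/\sqrt{d_i}-\vec{u}_j/\sqrt{d_j}\|^2$, with $L_{\mathrm{sym}}=I-D^{-1/2}AD^{-1/2}$. Each edge deleted by the perturbation joins two different spectral clusters, so its removal strictly lowers this cut and pushes the corresponding cluster embeddings apart; the induced $\delta U$ therefore enlarges the pairwise spread, making the leading coefficient multiplying $d^{inter}_j$ strictly positive. Consequently $\widehat{Imp}(v_j)$ is, to leading order, strictly increasing in $d^{inter}_j$, and its maximizer coincides with that of $Bridgeness(v_j|C,G)=d^{inter}_j$ (taking $\gamma\equiv 1$), which is the claim.

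The hard part will be upgrading the leading-order magnitude estimate to an \emph{exact} argmax identity. The cut-edge contribution is genuinely a weighted sum $\sum_q\|\vec{u}_j/\sqrt{d_j}-\vec{u}_q/\sqrt{d_q}\|^2$ rather than the pure count $d^{inter}_j$, so a cluster-separation assumption (inter-cluster pairs being roughly equidistant in the embedding) is needed to pass from the weighted to the unweighted count, and the correction $\delta U$ must be controlled by a spectral-gap hypothesis so that $U'_{-v_j}$ stays in the regime where the first-order expansion dominates the higher-order remainder. Verifying that these approximations do not reorder the top scores — i.e., that the node of largest inter-cluster degree remains the true maximizer — is the crux of the argument.
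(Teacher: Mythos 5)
Your approach diverges fundamentally from the paper's, and the divergence is exactly where your argument breaks down. The paper's proof is not perturbative at all: it uses the identity (cited from von Luxburg's tutorial)
$\frac{1}{2}\sum_{v_i,v_j}\bigl\|\vec{u}_i/\sqrt{d_i}-\vec{u}_j/\sqrt{d_j}\bigr\|^2=\sum_{m=1}^k U_{:,m}^T L_{sym}U_{:,m}$
together with $A_{sym}=I-L_{sym}$ to rewrite $PD(G,U)$ \emph{exactly} as a constant minus $2\sum_{m=1}^k U_{:,m}^T A_{sym}U_{:,m}$, and then identifies this trace --- because $U$ is the spectral-clustering optimizer --- with the relaxed normalized association $N_{asso}(C_1,\dots,C_k\mid G)$. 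Hence $\widehat{Imp}(v_j)=2\bigl(N_{asso}(C_1,\dots,C_k\mid G'_{-v_j})-N_{asso}(C_1,\dots,C_k\mid G)\bigr)$ holds as an identity, with no eigenvector expansion of $U'_{-v_j}$ around $U$. The argmax claim then follows combinatorially (the paper's Lemmas \ref{lemma:perturb} and \ref{lemma:argmaxdist}): since Perturb preserves all degrees, every denominator $assoc(C_i,V)$ is unchanged, while the numerators increase by exactly the weight of the removed inter-cluster edges (converted to self-loops), so the change in $N_{asso}$ is exactly monotone in $d^{inter}_j$ and is maximized by the highest-bridgeness node.

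Your proposal instead tracks how the eigenvectors move ($\delta U$ via Davis--Kahan), which only yields $\widehat{Imp}(v_j)=O(d^{inter}_j)$ to leading order, and you candidly flag that upgrading this to the exact argmax identity is the crux --- but that crux is never resolved, and it cannot be resolved along your route without hypotheses the theorem does not grant (a spectral gap to control the higher-order remainder, and near-equidistance of inter-cluster pairs to replace the weighted cut sum by the pure count $d^{inter}_j$). The missing idea is that one should never expand $U'_{-v_j}$ around $U$ at all: because both $U$ and $U'_{-v_j}$ are optimizers of their respective relaxed association problems, the difference of the two $PD$ values equals the difference of two \emph{optimal objective values}, and the degree-preserving structure of the perturbation lets you compute that difference exactly, with no approximation and no reordering concerns. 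As written, your argument establishes a magnitude estimate, not the stated equality of argmaxes, so the gap is genuine.
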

\vspace{-5mm}

For proofs of lemmas and theorems, see \ref{sec:proofs}.

\vspace{-2mm}

\subsection{Connection to Skip-gram Embedding Methods}
Next, we show which node is a \textit{node-level explanation} for the DeepWalk embedding method.

\begin{theorem}
Let $G'_{-v_j}$ represent a perturbed version of graph $G$ with $V$ nodes, using Perturb($G$, $v_j$, $\alpha\!=\!1$) (definition \ref{def:perturb}). Let $W \in \mathbb{R}^{|V| \times k}$ and ${W'}_{-v_j} \in \mathbb{R}^{|V| \times k}$ be embeddings from DeepWalk (with context window size $r$) over $G$ and ${G'}_{-v_j}$, respectively. 
Let $\tilde{C}=\{\tilde{C}_1, ..., \tilde{C}_k\}$ be $k$ clusters from spectral clustering of $\tilde{G}_r$, where $\tilde{G}_r$ is an $r^{th}$ weighted power transformation of $G$ 
that is degree-normalized. 
Then,\\
\begin{equation*}
\resizebox{0.6\hsize}{!}{%
    $\text{arg}\!\,\max\limits_{v_j \in V} \widehat{Imp}(v_j|G, {G'}_{-v_j}, W,{W'}_{-v_j}) = \text{arg}\!\,\max\limits_{v_j \in V} Bridgeness(v_j| \tilde{C}, \tilde{G}_r)$.
}
\end{equation*}

\vspace{-2mm}
\label{theorem:n2v:bridge}
\end{theorem}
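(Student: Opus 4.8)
The plan is to reduce Theorem \ref{theorem:n2v:bridge} to the already-established Theorem \ref{theorem:eigenvec:bridge} by showing that the DeepWalk embedding $W$ is, for the purposes of the functional $\widehat{Imp}$, a spectral embedding of the transformed graph $\tilde{G}_r$. The key external ingredient I would invoke is the implicit matrix-factorization view of DeepWalk: with context window $r$ and negative sampling, DeepWalk (approximately) factorizes the matrix
\[
M_r \;\propto\; \log\!\Bigl( D^{-1/2}\Bigl(\sum_{s=1}^{r}\mathcal{A}^{s}\Bigr)D^{-1/2}\Bigr),
\]
where $\mathcal{A}=D^{-1/2}AD^{-1/2}$ is the normalized adjacency of $G$ and the identity $\bigl(\sum_s P^{s}\bigr)D^{-1}=D^{-1/2}\bigl(\sum_s \mathcal{A}^{s}\bigr)D^{-1/2}$ (with $P=D^{-1}A$) converts the random-walk powers into a symmetric, degree-normalized form. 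This is precisely the role played by $\tilde{G}_r$.

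First I would fix the definition of $\tilde{G}_r$ as the graph whose degree-normalized adjacency is $\sum_{s=1}^{r}\mathcal{A}^{s}$, i.e. the $r$-th weighted power transformation of $G$, and observe that, being a matrix polynomial in $\mathcal{A}$, it shares its eigenvectors with $\mathcal{A}$ while reweighting the eigenvalues ($\lambda_i \mapsto \sum_{s=1}^r \lambda_i^{s}$). This reweighting changes the ordering of the dominant directions, which is exactly why the relevant clustering is $\tilde{C}$, the spectral clustering of $\tilde{G}_r$, rather than the clustering of $G$ itself. I would then argue that the top-$k$ factor of $M_r$ spans (approximately) the same subspace as the leading eigenvectors $\tilde{U}$ of the normalized adjacency of $\tilde{G}_r$, so that the $m$-nearest-neighbourhoods and the degree-normalized pairwise-distance functional $PD(\cdot)$ of Definition \ref{def:pairwisenodeimp} evaluated on $W$ agree, up to a constant depending only on $D$ and $r$, with the same functional evaluated on $\tilde{U}$.

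Next I would verify that the perturbation Perturb$(G,v_j,\alpha\!=\!1)$ of Definition \ref{def:perturb} is compatible with this transformation. Because that construction preserves every node degree and the graph volume, it leaves $D$ invariant, so the degree normalization is identical before and after perturbation; moreover, deleting all inter-cluster edges of $v_j$ with respect to $\tilde{C}$ in $G$ propagates, through the power transformation, to the removal of the inter-cluster connections of $v_j$ in $\tilde{G}_r$. With $W\approx\tilde{U}$, the invariance of $D$, and the cluster set $\tilde{C}$ identified, the conclusion then follows by applying Theorem \ref{theorem:eigenvec:bridge} verbatim to $\tilde{G}_r$ with clusters $\tilde{C}$ and embedding $\tilde{U}$, which yields $\text{arg}\max_{v_j}\widehat{Imp}(v_j\mid G,G'_{-v_j},W,W'_{-v_j}) = \text{arg}\max_{v_j} Bridgeness(v_j\mid \tilde{C},\tilde{G}_r)$.

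The hardest part I expect is controlling the element-wise logarithm and the truncation $\max(\cdot,1)$ that are intrinsic to skip-gram with negative sampling: I must ensure that the node ranking induced by $\widehat{Imp}$ on the log-transformed, rank-$k$-truncated factorization coincides with the ranking on the clean spectral embedding of $\tilde{G}_r$, which I would handle via a small-window / monotone-transformation argument that preserves the dominant spectral subspace. The remaining pieces, namely the polynomial-in-$\mathcal{A}$ eigenvector identity and the degree/volume invariance of the perturbation, are essentially bookkeeping that feeds directly into Theorem \ref{theorem:eigenvec:bridge}.
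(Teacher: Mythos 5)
Your proposal follows essentially the same route as the paper's proof: invoke the Qiu et al.\ matrix-factorization characterization of DeepWalk to identify $W$ with the spectral embedding of $\tilde{G}_r$, then apply Theorem \ref{theorem:eigenvec:bridge} to $\tilde{G}_r$ with clusters $\tilde{C}$, and finally use the degree-preservation of the perturbation (and the fact that $\widehat{Imp}$ depends on the graphs only through the degree normalization, with $\tilde{D}_r \approx D$ up to a constant) to transfer the conclusion back to $G$. If anything, you are more candid than the paper about the weakest link --- the element-wise logarithm and rank-$k$ truncation in the skip-gram factorization, and the implicit commutation of perturbation with the power transformation --- which the paper passes over by citing the approximation result without further control.
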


\noindent This theorem shows that the node with the highest \textit{bridgeness} in $\tilde{G}_r$ is also the node with the largest importance score (thus, node-level explanation for $W$) for the DeepWalk embedding over $G$. In the Appendix, we outline how the same node is also likely to have the highest bridgeness score in $G$ (i.e.,  $\text{arg}\!\,\max\limits_{v_j \in V} Bridgeness(v_j| \tilde{C}, \tilde{G}_r)  \approxeq  \text{arg}\!\,\max\limits_{v_j \in V} Bridgeness(v_j| C, G)$). 

Other embedding models such as LINE, PTE, and struc2vec can be analyzed in the same way. For LINE and PTE, similar theoretical approximations were studied in \citep{qiu2018network}. As in Table 1 in \citep{qiu2018network}, we can expect the same theoretical properties w.r.t.\ perturbation for both LINE \citep{line} and PTE \citep{pte} because our perturbation method does not change degree matrices and other constants. struc2vec \citep{struc2vec} learns low-dimensional representations using a Skip-gram architecture on the context graph, which is constructed by structural similarity among nodes. When input adjacency is assumed as the context graph, its bridge nodes are still important as in DeepWalk. Similarly, RWJBG \citep{li2021random} learns node embeddings on joint behavior graphs under the same Skip-gram framework and our analysis remains applicable. UserItem2vec \citep{wu2020multi} also constructs an attributed heterogeneous user/item network and the embeddings are learned on the Skip-gram architecture; the most important nodes are identifiable under our proposed idea.

\begin{figure*}[t]
 \centering
 \begin{subfigure}[t]{0.49\textwidth}
  \includegraphics[width=\textwidth]{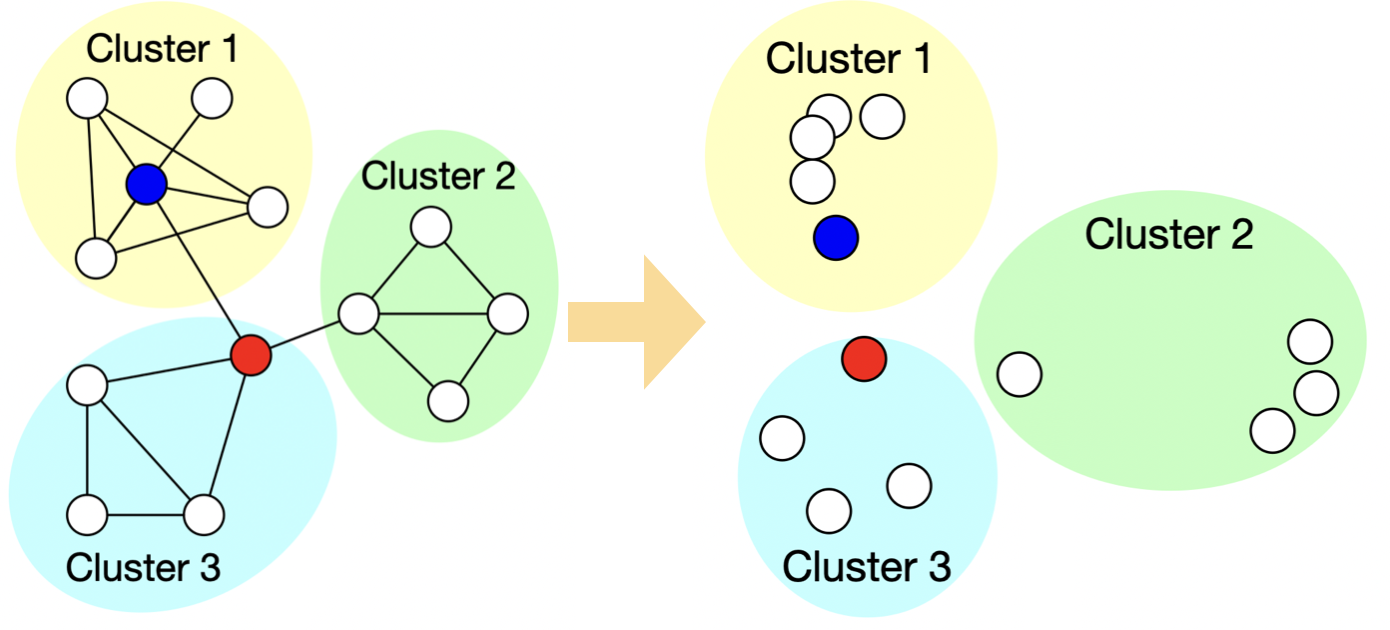}
    \caption{Left: Input graph. Right: Its node embeddings}
 \end{subfigure}
 \begin{subfigure}[t]{0.49\textwidth}
  \includegraphics[width=\textwidth]{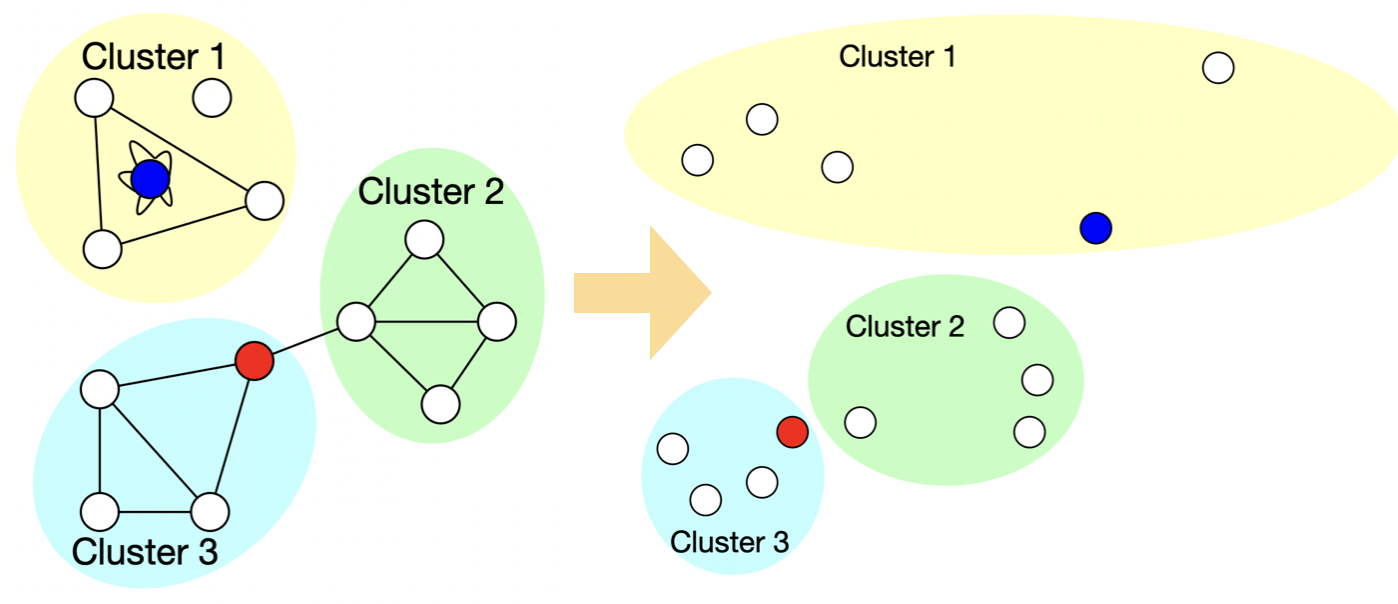}
    \caption{Left: Perturbed graph by rewiring edges of the highest degree node (blue). Right: Its embeddings}
 \end{subfigure}  
 \begin{subfigure}[t]{0.49\textwidth}
  \includegraphics[width=\textwidth]{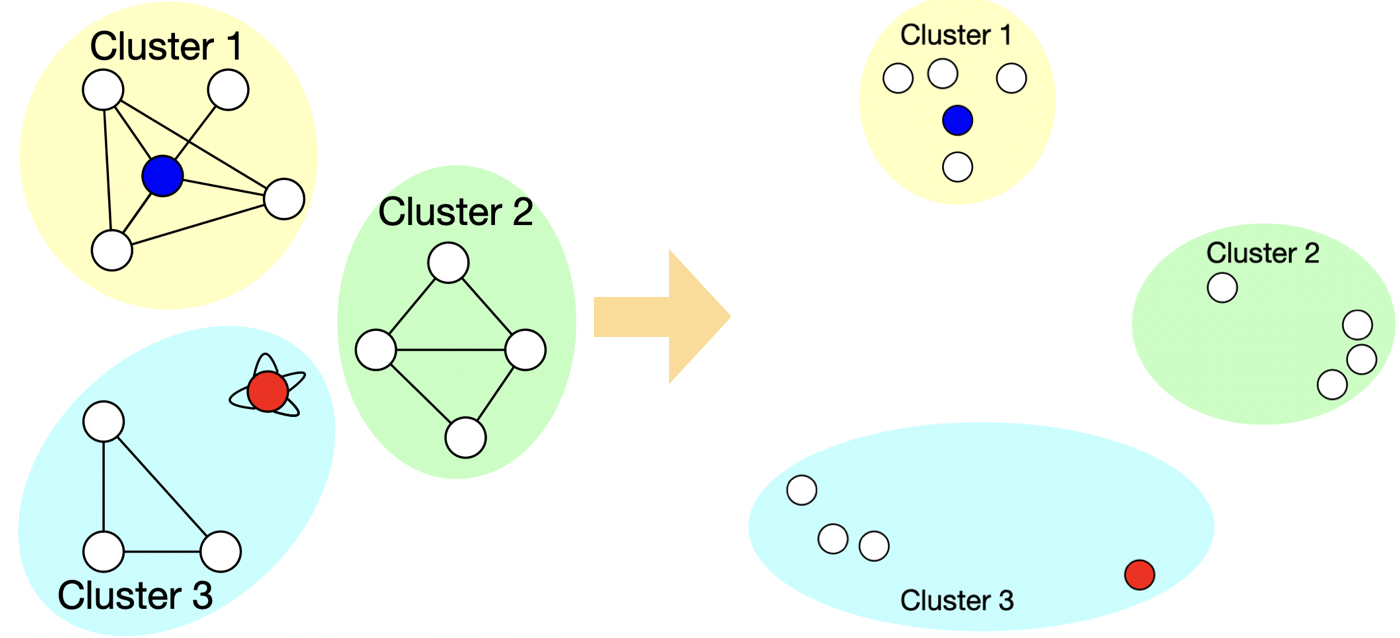}  
    \caption{Perturbed graph by rewiring edges of highest-bridgeness node (red) and its embeddings}
 \end{subfigure}  
  \caption{Illustrative example of the global effect of the highest-bridgeness node. (Best viewed in color.)}
  \label{fig:theorem:example}
\end{figure*}

\subsection{Illustrative Example} In Theorem \ref{theorem:n2v:bridge}, node-level explanations are theoretically the same as the highest-bridgeness nodes. Figure \ref{fig:theorem:example} shows an illustrative example, where the highest-bridgeness node, which is colored red, has the greatest effect on the learned embeddings. Figure \ref{fig:theorem:example}(a) shows the input graph and its node embeddings. The cluster assignments are from the spectral clustering on the embedding vectors. When we perturb the edges of the blue node (thus, the highest-degree node), the representations of the nodes of Cluster 1 move a lot but relative positions of Clusters 2 and 3 are still preserved as in Figure \ref{fig:theorem:example}(b). We note that cluster assignments still follow the initial assignments in Figure \ref{fig:theorem:example}(a). Meanwhile, when we perturb the edges of the red node (thus, the highest-bridgeness node) as in Figure \ref{fig:theorem:example}(c), relative positions across all clusters are changed because all clusters lose information to determine relationships with other clusters. Therefore, the sum of pairwise distances is changed more compared with the case of Figure \ref{fig:theorem:example}(b). In other words, according to Definition \ref{def:pairwisenodeimp}, the red node becomes the \textit{global} node-level explanation.


\section{Gradient-Based Method to  Identify High-Bridgeness Nodes}\label{sec:graphgd_wgd}

In the previous section, we showed the relationship between the bridgeness of a node and a node-level explanation in embeddings such as DeepWalk. However, existing methods to find bridge nodes are computationally intensive, $O(|E|+|V|^{2.367})$, owing to inter-community degree computation ($O(|E|)$) performed after spectral graph clustering ($O(|V|^{2.367})$). In particular, spectral clustering relies on eigenvector decomposition (EVD); the time complexity of EVD is $O(|V|^{2.367})$ or $O(nnz(A))$ \citep{garber2016faster}. Hence, $O(nnz(A))=O(A^2)$ if $A$ is dense, where $nnz(A)$ denotes the number of non-zeros in $A$.

To mitigate this burden, we propose a gradient-based method to find the explanation; this further enables us to find its top-$q$ node-level explanations. First, we derive a measure by exploiting the magnitude of the gradient, which is also called \textit{sensitivity} \citep{simonyan2013deep}. 

We note that the magnitude is measured after learning the corresponding embedding model\footnote{While learning node embeddings in a Skip-gram-based method such as DeepWalk or LINE, the magnitude of the gradient updates does not converge to a very small value. This is because although relative node positions converge, absolute node locations in the embedding space never converge in practice because of the pairwise update nature of hierarchical softmax or negative sampling \citep{word2vec,line}. 
}
To measure  \textit{sensitivity} for embedding models, we outline an initial gradient-based measure, which we call \textbf{GRAPH-GD}. Again, $W \in \mathcal{R}^{|V| \times k}$ is a learned latent node representation from model $\mathcal{M}$ and $\vec{w}_i \in \mathcal{R}^{k}$ is the embedding vector of $v_i \in V$.

\vspace{2mm}

\begin{equation*}
\resizebox{0.55\hsize}{!}{%
    $\text{\textbf{GRAPH-GD}}(v_b, W, \mathcal{N}_{G}, \mathcal{L}_{\mathcal{M}} ) =  \text{MEAN}\left\{\left| \frac { \partial \mathcal{L}_{\mathcal{M}}(v_b, v_i) }{ \partial \vec{w}_i}\right|, v_i \in \mathcal{N}_{G}(v_b|\psi)\right\}$
}
\label{equ:graphgd}
\end{equation*}
\vspace{1mm}

\noindent Here, $\mathcal{N}_{G}(v_b|\psi)$ returns the neighbors in $G$ of a given node $v_b$, with $\psi$ denoting the number of sampled neighbors from the full neighbor set. We use $\mathcal{L}_{\mathcal{M}}$ to denote the loss function of the  embedding model $\mathcal{M}$, which takes a target (input) node and context node, $v_b$ and $v_i$, respectively, and $ \partial \mathcal{L}_{\mathcal{M}} / \partial \vec{w_i}$ is a \textit{sensitivity} function to return the magnitude of its gradient w.r.t. any $\vec{w}_i \in W$. For example, the loss function of hierarchical softmax-based embedding methods (e.g., DeepWalk and Struc2Vec) has $\mathcal{L}_{hs}(v_b, v_i) = -\log  f(v=v_i|v_b)$, where $f$ is a learned function from the embedding model to calculate the likelihood of $v_i$ given $v_b$ to appear. 
In negative sampling-based embedding methods such as LINE and Node2Vec, loss functions can be generalized as $\mathcal{L}_{ns}(v_b, v_i) = -\log  \sigma(\vec{w}_i^T \vec{w}_b) - \sum_{v_z \in V_{neg}} \log  \sigma(-\vec{w}_z^T \vec{w}_b)$, where $V_{neg}$ is a sampled negative node set. This implies that our GRAPH-GD can be applied to most hierarchical softmax and negative sampling-based embedding methods. The magnitudes are aggregated over neighbors of $v_b$ and, in this work, we uniformly sample a fixed-size ($\psi$) set of neighbors for $\mathcal{N}_{G}(v_b|\psi)$, instead of using full neighborhood sets $\mathcal{N}_G(v_b)$.  Section \ref{sec:rel:graphgd:bridge} describes the theoretical relationship between GRAPH-GD and bridge nodes.

\textcolor{black}{
GRAPH-GD tries to identify bridge nodes by computing the average of the gradient updates' magnitude between a potential bridge node $v_b$ and one of its neighboring (context) nodes $v_i \in \mathcal{N}_{G}(v_b|\psi)$. However, gradient magnitude-based explainability measures are prone to capture noise rather than relevant signals~\cite{kindermans2019reliability,tomsett2020sanity,pope2019explainability}, resulting in high variance. To overcome this issue, instead of equally weighting the gradients, we take into account the degree to which each neighboring node $v_i$ is related to both the potential bridge $v_b$ and the cluster that $v_b$ is bridging. This allows us to assign higher weights to more informative neighboring nodes and better identify high bridgeness nodes. To address this, we propose a weighted aggregation method, \textbf{GRAPH-wGD}.}

%
\vspace{2mm}

\begin{equation*}
\resizebox{0.67\hsize}{!}{%
    $\text{\textbf{GRAPH-wGD}}(v_b, W, \mathcal{N}_{G}, \mathcal{L}_{\mathcal{M}} ) \!=  \text{MEAN}\left\{h(v_b, v_i) \!\cdot\! \left|\frac { \partial \mathcal{L}_{\mathcal{M}}(v_b, v_i) }{ \partial \vec{w}_i}\right|, v_i \in \mathcal{N}_{G}(v_b|\psi)\right\}$,
}
\label{equ:graphwgd}
\end{equation*}

\begin{equation*}
\resizebox{0.37\hsize}{!}{%
    $h(v_b, v_i) = 1 \!+\! \overline{cos} (\vec{w}_b -  \vec{w}_{i}, \break  -\frac { \partial \mathcal{L}_{\mathcal{M}}(v_b, v_i) }{ \partial \vec{w}_i} )$,
}
\end{equation*}

\vspace{4mm}

\noindent where $\overline{cos}$ is a weight function. $\overline{cos}$ returns the cosine similarity value if the value is positive, otherwise 0. Its theoretical analysis is described in Sec. \ref{sec:rel:graphwgd:bridge} and explains how GRAPH-wGD improves on GRAPH-GD. \textcolor{black}{In our analysis, we find that when ${cos} (\vec{w}b - \vec{w}{i}, -\frac { \partial \mathcal{L}_{\mathcal{M}}(v_b, v_i) }{ \partial \vec{w}_i} )>0$, node $v_i$ is serving as a support node and demonstrating that node $v_b$ has a bridging role. Furthermore, the analysis shows that the cosine similarity-based weight function $h(v_b, v_i)$ provides more discriminatory power compared to the weight aggregation method of GRAPH-GD.}

Algorithm \ref{alg:imp:procedure} shows our procedure for discovering the top $q$ node-level explanations with \textbf{GRAPH-wGD} using the negative sampling of Skip-gram architecture. Given the learned embedding model and other inputs, GRAPH-wGD is used for computing node-level importance, and the top $q$ nodes are returned \textcolor{black}{as a global node-level explanation.}  We can get the gradient w.r.t. $\vec{w}_i$ as $ (\sigma(\vec{w}_i^T \vec{w}_b) - 1) \vec{w}_b$ from $\mathcal{L}_{ns}(v_b, v_i)$ \citep{rong2014word2vec}.

 \begin{algorithm}[tb]
 \SetAlgoLined
  \KwInput{A set of vertices $V$, Neighbor function $\mathcal{N}_{G}$, Embeddings $W$, number of nodes to return $q$, number of negative samples $\eta$} 
  
	Init an array, \textit{I}, which has $|V|$ element \;

	\For{$\forall v_b \in V$}{
	    $V_i$ = Sample $\psi$ neighbors of $v_b$ by using $\mathcal{N}_{G}(v_b|\psi)$ \;
		Set \textit{I}[b]= MEAN$\left\{h(v_b, v_i) \!\cdot\! \left| (\sigma(\vec{w}_i^T \vec{w}_b) - 1) \vec{w}_b  \right|, \forall v_i \in V_i \right\}$ \;
    }
	Return indices who have top $q$ highest scores in \textit{I} \;
 \caption{Procedure to find Top $q$ explanations using GRAPH-wGD with negative sampling}
 \label{alg:imp:procedure}
\end{algorithm}

\subsection{Complexity Analysis}

The time complexity of GRAPH-GD and GRAPH-wGD depends on the number of vertices $|V|$, the computation of gradient, and the size of neighbor sampling set $\psi$.  The computation of the gradient (as in \citep{rong2014word2vec} and \citep{word2vec}) in the Skip-gram architecture with negative sampling mainly depends on the size of the  dimension $k$ of the embedding vector, so the complexity is $O(k \psi|V|)$. Because $k$ and $\psi$ are constants, the complexity can be reduced to $O(|V|)$. Note that the performance of our model converges when $\psi > 100$, and we set $\psi=100$ in experiments.

\vspace{-2mm} 

\section{Theoretical Analysis}\label{sec:ta}
\subsection{Relationship between GRAPH-GD and Bridge Nodes}
\label{sec:rel:graphgd:bridge}
Let $v_j$ be the context node of $v_i$. We assume that the context node is from $\mathcal{N}_{G}({v_i}|\psi)$. Let $\vec{w}_i$ be the embedding of $v_i$.
We assume that $\left|\frac { \partial \mathcal{L}(v_j, v_i) }{ \partial \vec{w}_i}\right| = dist(\vec{w}_i, \vec{w}_j )$, where the  function $dist$ measures the distance between $\vec{w}_i$ and $\vec{w}_j$. The distance is used to decide the error to update the embedding vector for $\vec{w}_i$. To show how GRAPH-GD finds bridge nodes, we need to assume that the learned embedding is {\em good embedding} as follows.

\begin{definition}[Good embedding] Let $W$ be an embedding learned from a graph $G$ with model $\mathcal{M}$ and let $C$ be a set of (arbitrary) clusters from $G$. Let  $\vec{w}_i$ be the embedding of node $v_i$. 
We refer to $W$ as a {\em good embedding} w.r.t.~$C$ when it satisfies the following condition: for all $v_i, v_j, v_k \in V$, if $v_i, v_j \in C_b$ and $v_k \in C_{n (\neq b)}$, then  ${dist}(\vec{w}_i, \vec{w}_j)  < dist(\vec{w}_k,\vec{w}_j)$.

\label{def:goodembedding}
\end{definition}

This definition does not imply a perfect clustering according to the gold standard. After clustering on the embeddings, provided that clusters are separated sufficiently in the embedding space, the condition is satisfied. The good embedding assumption is applicable to both hierarchical softmax and negative sampling-based embeddings for learning Skip-gram architectures. For those embeddings, nodes in the same cluster are likely to be placed close together; therefore, their gradient update is smaller than the gradient update of the opposite case. Based on the abovementioned definition, when comparing two nodes with the same degree, GRAPH-GD identifies the node with higher  \textit{bridgeness}. The notation ($W$, $\mathcal{N}_{G}$, $\mathcal{L}_{\mathcal{M}}$) in GRAPH-GD does not change in this analysis, so we neglect it for clarity.

\begin{lemma}
Let $G$ be a graph with a set of (arbitrary) clusters $C$ and associated embeddings $W$. Let $v_b \in C_b$ be a bridge node and let $v_c \in C_b$ be a node with the same degree as $v_b$, but fewer inter-cluster edges (thus, lower \textit{bridgeness}). Here, $\text{B-Cluster}(v_b)$ denotes the set of clusters that are bridged by $v_b$. We assume that $\text{B-Cluster}(v_b) \supseteq \text{B-Cluster}(v_c)$ and $W$ is a {\em good embedding} as in Definition \ref{def:goodembedding}. Then, GRAPH-GD($v_b$) $>$ GRAPH-GD($v_c$).
\label{lemma:graphgd:bridge}
\end{lemma}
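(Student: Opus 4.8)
The plan is to reduce GRAPH-GD to an average of embedding distances and then exploit that a bridge node has strictly more of its sampled neighbors lying in foreign clusters, each of which contributes a large distance term. By the standing assumption of this subsection, $\left|\partial \mathcal{L}(v_b,v_i)/\partial \vec{w}_i\right| = dist(\vec{w}_b,\vec{w}_i)$, so that
\[
\text{GRAPH-GD}(v_b) = \text{MEAN}\{dist(\vec{w}_b,\vec{w}_i) : v_i \in \mathcal{N}_{G}(v_b|\psi)\}.
\]
First I would partition the neighbors of each of $v_b$ and $v_c$ into an intra-cluster part (neighbors lying in $C_b$) and an inter-cluster part (neighbors lying in some $C_n$, $n\neq b$). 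Since $v_b$ and $v_c$ share the same degree $d$ while $v_b$ has strictly more inter-cluster edges, $|N^{inter}(v_b)| = d^{inter}_{b} > d^{inter}_{c} = |N^{inter}(v_c)|$ and correspondingly $|N^{intra}(v_b)| < |N^{intra}(v_c)|$.

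Second, I would use the good-embedding hypothesis (Definition \ref{def:goodembedding}) to separate the two kinds of contributions. Applying the definition with the center node fixed at $v_b$ (which lies in $C_b$) shows that every intra-cluster neighbor distance $dist(\vec{w}_b,\vec{w}_i)$, $v_i\in C_b$, is strictly smaller than every inter-cluster neighbor distance $dist(\vec{w}_b,\vec{w}_k)$, $v_k\notin C_b$; the same holds with $v_c$ as center. To turn these two per-center orderings into a single comparison I would introduce common bounds $a<b$ with all intra-cluster distances $\le a$ and all inter-cluster distances $\ge b$: the intra side is controlled by the embedding diameter of $C_b$ (both $v_b,v_c\in C_b$), and the containment assumption $\text{B-Cluster}(v_b)\supseteq\text{B-Cluster}(v_c)$ guarantees that the inter-cluster neighbors of $v_c$ reach only clusters that $v_b$ also bridges, keeping the inter-distance lower bound uniform across the two nodes.

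Third, writing each GRAPH-GD value (in expectation over the uniform neighbor sampling, where the expected fraction of sampled inter-cluster neighbors equals $d^{inter}/d$) as a convex combination
\[
\text{GRAPH-GD}(v) = \Big(1-\tfrac{d^{inter}_{v}}{d}\Big)\,\overline{dist}^{\,intra}_v + \tfrac{d^{inter}_{v}}{d}\,\overline{dist}^{\,inter}_v,
\]
with $\overline{dist}^{\,intra}_v \le a < b \le \overline{dist}^{\,inter}_v$, the claim follows because increasing the weight $d^{inter}/d$ on the larger term strictly increases the combination; formally, $\text{GRAPH-GD}(v_b)-\text{GRAPH-GD}(v_c) \ge \tfrac{d^{inter}_{b} - d^{inter}_{c}}{d}\,(b-a) > 0$.

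I expect the main obstacle to be exactly this cross-node comparison: the good-embedding definition only orders distances measured from a single fixed center, so on its own it does not let me compare a neighbor-distance of $v_b$ against a neighbor-distance of $v_c$. The crux is therefore to manufacture the uniform separation $a<b$ valid from both vantage points, which is where the hypotheses that $v_b,v_c$ share the cluster $C_b$ and that $\text{B-Cluster}(v_c)\subseteq\text{B-Cluster}(v_b)$ become essential (and may require reading a mild uniform-separation strengthening into \emph{good embedding}). A secondary, more routine point is justifying the passage from the sampled mean to its expectation, which I would handle either by taking $\psi$ equal to the full degree or by arguing in expectation over the uniform sample.
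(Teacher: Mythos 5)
Your overall route is the same as the paper's: the paper's proof likewise rewrites GRAPH-GD as a mean of distances, splits each neighborhood into $\mathcal{N}_G(\cdot)\cap C_b$ and $\mathcal{N}_G(\cdot)\cap C/C_b$, and concludes from the good-embedding assumption together with $|\mathcal{N}_G(v_b)\cap C/C_b| > |\mathcal{N}_G(v_c)\cap C/C_b|$. You have also correctly identified the one step the paper leaves informal: Definition \ref{def:goodembedding} only orders distances that share a common anchor node, so by itself it never compares a neighbor-distance of $v_b$ against a neighbor-distance of $v_c$; the paper bridges this with ``we can expect larger distance values \dots on average'' rather than with an argument. Your diagnosis of the crux is therefore sharper than the paper's own treatment.

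However, your proposed repair does not close the gap. The ``formal'' inequality $\text{GRAPH-GD}(v_b)-\text{GRAPH-GD}(v_c) \ge \frac{d^{inter}_b - d^{inter}_c}{d}(b-a)$ is false under the one-sided bounds you impose (all intra-distances $\le a$, all inter-distances $\ge b$): monotonicity of a convex combination in its weight is valid only when the two endpoint values are held fixed, whereas here both the weights \emph{and} the averaged values change between $v_b$ and $v_c$. Concretely, take a one-dimensional embedding with $C_b$ inside an interval of length $1$ and $C_n$ inside $[10,11]$, so $a=1<b=9$ and Definition \ref{def:goodembedding} holds; let $d=10$; let $v_b$ (at $0.5$) have $8$ intra-neighbors at distance $0$ and $2$ inter-neighbors at distance $9.5$, so $\text{GRAPH-GD}(v_b)=1.9$; let $v_c$ (at $0$) have $9$ intra-neighbors at distance $1$ and $1$ inter-neighbor at distance $11$, so $\text{GRAPH-GD}(v_c)=2.0$. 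All hypotheses of the lemma hold ($v_b,v_c\in C_b$, equal degrees, $d^{inter}_b=2>1=d^{inter}_c$, identical B-Clusters), yet the conclusion fails. This shows the missing ingredient is not the uniform separation $a<b$ you manufacture but \emph{two-sided} control: the spread of intra-distances and of inter-distances across the two neighborhoods must be small relative to $(b-a)$ times the bridgeness surplus $d^{inter}_b-d^{inter}_c$ (the degenerate case where all intra-distances are equal and all inter-distances are equal is what your final formula implicitly assumes). That is a strictly stronger hypothesis than good embedding — and since the counterexample satisfies every stated hypothesis of Lemma \ref{lemma:graphgd:bridge}, no proof can avoid adding such an assumption; the paper's proof silently makes the same leap at exactly the point you flagged.
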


\subsection{Comparison between GRAPH-wGD and GRAPH-GD}
\label{sec:rel:graphwgd:bridge}

In this section, we show how GRAPH-wGD helps to identify nodes that have higher \textit{bridgeness} than GRAPH-GD. First, we state the definition of a \textit{support node}. 

\vspace{-1mm}
\begin{definition}[Support node] Assume that in graph $G$ with (arbitrary) clusters $C$, $v_b$ is a bridge node. Let $\text{B-Cluster}(v_b)$ be the set of clusters that are bridged by $v_b$. Then let cluster $C_i \subset \text{B-Cluster}(v_b)$, let $\vec{w}_b$ be the embedding of $v_b$, and let $\vec{w}_{C_i}$ be the centroid of nodes in $C_i$ in the embedding space.
Let $cos$ be a cosine distance function. If $cos(\vec{w}_b - \vec{w}_i, \vec{w}_b-\vec{w}_{C_i}) > 0$, then $v_i$ is a support node for giving $v_b$ the bridging role for $C_i$. 
\label{def:support}
\end{definition}
\vspace{-1mm}

We show that support nodes provide more meaningful evidence for identifying the bridge role of $\vec{w}_b$ for $C_i$ than non-support nodes, which can potentially enhance GRAPH-GD. Definition \ref{def:dweight} and Lemma \ref{lem:support:dweight} in Section~\ref{sec:proofs} show that the directional weight can find support nodes. The following theorem shows that GRAPH-wGD helps to identify nodes that have higher \textit{bridgeness} than GRAPH-GD because the difference in Graph-wGD is larger than that in Graph-GD when two nodes have different \textit{bridgeness}. We assume that all neighbors are returned from $\mathcal{N}_G$ without sampling (i.e., $\psi=max$(node degree)). 

\vspace{-1mm}
\begin{theorem}
Let $G$ be a graph with a set of (arbitrary) clusters $C$ and associated embeddings $W$. Let $v_b \in C_b$ be a bridge node with at least one support node in $\mathcal{N}_{G}(v_b)$, let $v_c \in C_b$ be a node with the same degree but lower \textit{bridgeness} than $v_b$, and let $\mathcal{N}_{G}(v_c)$ have no support node. Assume that $\text{B-Cluster}(v_b) \supseteq \text{B-Cluster}(v_c)$ and that $W$ is a {\em good embedding} as in Definition \ref{def:goodembedding}.
If, for any $C_i \!\subset\! \text{B-Cluster}(v_b)$, the sum of gradient updates of $\{(v_j,v_k)|\: \forall v_j, v_k \!\in\! \ C_i\}$ converges to the centroid $w_{C_i}$ (due to dense edges among nodes in $C_i$), then GRAPH-wGD($v_b$) $>$ GRAPH-GD($v_b$) $>$ GRAPH-GD($v_c$) $=$ GRAPH-wGD($v_c$). 

\label{theorem:graphwgd:bridge}
\end{theorem}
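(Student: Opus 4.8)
The plan is to prove the three-fold chain by handling each relation separately and then composing them. Write $g_i := \partial \mathcal{L}_{\mathcal{M}}(v_b,v_i)/\partial \vec{w}_i$ for the per-pair gradient and $c_i := \overline{cos}(\vec{w}_b - \vec{w}_i,\, -g_i) \geq 0$ for its clipped directional cosine, so that $h(v_b,v_i) = 1 + c_i$. The middle inequality $\text{GRAPH-GD}(v_b) > \text{GRAPH-GD}(v_c)$ is already supplied by Lemma \ref{lemma:graphgd:bridge}, whose hypotheses---$v_b$ a bridge node, $v_c$ of equal degree but strictly lower \textit{bridgeness}, $\text{B-Cluster}(v_b) \supseteq \text{B-Cluster}(v_c)$, and $W$ a \emph{good embedding} (Definition \ref{def:goodembedding})---coincide exactly with those assumed here. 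The two outer relations then reduce to controlling when the weight $h$ strictly exceeds $1$.

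For the right-hand equality I would argue that the absence of support nodes forces unit weights. Since $\overline{cos}$ clips negatives to zero, $c_i \geq 0$ and hence $h(v_c,v_i) \geq 1$, with equality exactly when neighbor $v_i$ fails the directional test. By hypothesis $\mathcal{N}_G(v_c)$ contains no support node, and Definition \ref{def:dweight} together with Lemma \ref{lem:support:dweight} guarantees that the directional weight flags precisely the support nodes; hence $c_i = 0$ and $h(v_c,v_i) = 1$ for every sampled neighbor of $v_c$. The weighted and unweighted means therefore coincide, giving $\text{GRAPH-wGD}(v_c) = \text{GRAPH-GD}(v_c)$.

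For the left-hand inequality I would express the gap as a single average, using linearity of the mean and $h(v_b,v_i) = 1 + c_i$,
\[
\text{GRAPH-wGD}(v_b) - \text{GRAPH-GD}(v_b) = \text{MEAN}\{\, c_i\,|g_i| : v_i \in \mathcal{N}_G(v_b|\psi)\,\}.
\]
Every summand is nonnegative because $c_i \geq 0$ and $|g_i| \geq 0$. By hypothesis $v_b$ has at least one support node $v_{i^{\star}} \in \mathcal{N}_G(v_b)$, so Lemma \ref{lem:support:dweight} gives $c_{i^{\star}} > 0$; moreover $|g_{i^{\star}}| = |(\sigma(\vec{w}_{i^{\star}}^{T}\vec{w}_b)-1)\vec{w}_b|$ is strictly positive, since $\sigma(\cdot) < 1$ and $\vec{w}_b \neq \vec{0}$. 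A strictly positive term in an average of nonnegative terms makes the whole quantity positive, so $\text{GRAPH-wGD}(v_b) > \text{GRAPH-GD}(v_b)$. Composing the three relations yields the claim.

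The main obstacle is reconciling the two distinct cosine conditions in play: the weight $h$ compares $\vec{w}_b - \vec{w}_i$ against the per-pair gradient direction $-g_i$, whereas the support-node test of Definition \ref{def:support} compares $\vec{w}_b - \vec{w}_i$ against the centroid direction $\vec{w}_b - \vec{w}_{C_i}$. Showing that these two tests carry the same sign---so that ``support node'' and ``positive directional weight'' coincide---is exactly where the hypothesis that the within-cluster gradient updates converge to the centroid $\vec{w}_{C_i}$ enters, and it is the content of Lemma \ref{lem:support:dweight}. I expect this sign-equivalence, established under the dense-intra-cluster-edge assumption, to be the delicate step, since all of the strictness used in the two outer relations flows through it; once it is granted, the remainder consists of elementary facts about means of nonnegative quantities.
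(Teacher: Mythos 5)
Your proposal is correct and follows essentially the same route as the paper: the paper likewise splits the chain into three relations, obtaining the middle inequality from Lemma \ref{lemma:graphgd:bridge}, the left inequality from the identity $\text{GRAPH-wGD}(v_b)-\text{GRAPH-GD}(v_b)=\text{MEAN}\{|\partial\mathcal{L}_{\mathcal{M}}(v_b,v_i)/\partial\vec{w}_i|\cdot(h(v_b,v_i)-1)\}$ together with Lemma \ref{lem:support:dweight} (packaged as Lemma \ref{lem:support:rel:graphgd:graphwgd}), and the right equality from $h\equiv 1$ in the absence of support nodes (Lemma \ref{lem:support:rel:graphgd:graphwgd:nosupport}). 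Your only departure is a slight additional rigor: you note explicitly that the support node's gradient magnitude is strictly positive, a point the paper's lemma leaves implicit when concluding strict inequality.
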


\vspace{-1mm}
\noindent From this theorem, we can expect GRAPH-wGD to assign node-importance scores that correlate more closely with \textit{bridgeness}.
\vspace{-4mm}


\section{Experiments}\label{sec:exp}

To evaluate our method, five real-world networks (Karate Network \citep{zachary1977information}, Wikipedia (Wiki) \citep{mahoney2011large}, BlogCatalog \citep{zafarani2009social}, Enron network data \citep{diesner2005exploration}, and NeurIPS data\footnote{Available at: https://www.kaggle.com/datasets/benhamner/nips-papers}) were used. First, Zachary's Karate Network is a social network of a university karate club. The network was used to understand what our GRAPH-wGD finds and to observe how the found nodes were correlated to the rank by true \textit{bridgeness} scores. Second, Wikipedia (Wiki) \citep{mahoney2011large} is a co-occurrence network of words appearing in a Wikipedia dump, which is composed of 4,777 nodes, 184,812 edges, and 40 labels. Third, BlogCatalog \citep{zafarani2009social} is a network of social relationships of the bloggers listed on the BlogCatalog website. It had 10,312 nodes, 333,983 edges, and 39 labels. Fourth, the Enron network \citep{diesner2005exploration} is composed of 150 nodes and 1,853 edges (after unifying 517,431 emails). For class labels, employees in management positions were used to evaluate our method. Fourth, the NeurIPS dataset was pre-processed from a Kaggle repository (benhamner/nips-papers) and co-authorship edges were extracted from papers between 1987 and 2017. It had 9,784 vertices and 22,198 edges.

\subsection{Evaluation Measures}

In the previous sections, we found that the node-level explanations (i.e., most-important nodes) were theoretically related to the highest-bridgeness nodes that connected across different clusters in the embedding space. To capture nodes that have high \textit{bridgeness} scores efficiently, a gradient-based explanation model GRAPH-wGD was proposed. We evaluated GRAPH-wGD using \textit{rank correlation}, \textit{node importance} (NI), and \textit{prediction change} (PC).

\vspace{1mm}
\paragraph{Spearman's Rank Correlation}
The Spearman correlation coefficient is defined as the Pearson correlation coefficient between the rank variables. This was used to evaluate how similarly our method ranks compared to the ranking by \textit{bridgeness}.

\vspace{1mm}
\paragraph{\textit{Node Importance}}
NI was used to determine the effect of the found global explanations (i.e., top-$q$ important nodes) in the embedding spaces.
To evaluate the actual effect of the important nodes returned by each algorithm, we used the node {\em Imp} function from Definition \ref{def:newnodeimp}. However, instead of calculating the value for each node $v_b$, we selected the top $z\%$ of returned ``important’’ nodes, which we refered to as $\mathbf{V}_{imp}$, and calculated $Imp(\mathbf{V}_{imp} | W, W'_{-\mathbf{V}_{imp}})$ with $m=0.05 \cdot |V|$, using the perturb function (Definition \ref{def:perturb}) with $\alpha=0.5$.

\vspace{1mm}
\paragraph{Prediction Change}
PC was used to determine the effect of the found global explanations in downstream tasks. In other words, this measured how much the predicted probability of a supervised model learned from embeddings changed after perturbing important nodes and re-learning the embeddings. The supervised prediction model was trained by using the embeddings of nodes in $V_{train}$ and their labels. 
Let $p(\hat{y}_i=c | W)$ be a predicted probability of $v_i$ for a class label $c$ from the model given the node embedding $W$ of $G$. Let $p(\hat{y_i}=c | W'_{-\mathbf{V}_{imp}})$ be the new prediction for $v_i$ based on the embedding $W'_{-\mathbf{V}_{imp}}$ of $G'_{-V_{imp}}$, using the perturb function (Definition \ref{def:perturb}) with $\alpha=0.5$. The PC score was computed as follows: $PC(\mathbf{V}_{imp}|W, W'_{-\mathbf{V}_{imp}}) = \frac{1}{|V_{test}|} \cdot \sum_{v_i \in V_{test}} \sum_{c \in C} {||(p(\hat{y}_i=c | W)  - p(\hat{y}_i=c | W'_{-\mathbf{V}_{imp}})||})$.

\subsection{Comparison of Models used to Determine Node Importance}

\paragraph{Bridgeness} We first learned node embeddings and then captured scores of the nodes \textit{bridgeness}, which were based on found clusters on the learned embedding using spectral clustering. Here \textit{bridgeness} is defined in Equation (\ref{equ:bridgeness}).

\vspace{1mm}
\paragraph{Degree} Node degrees were used for node importance scores, and we decided node-level explanations from the highest-degree nodes.

\vspace{1mm}
\paragraph{Personalized PageRank (PPR) } PPR was also used as a baseline method. We used default parameters in NetworkX 2.1.

\vspace{1mm}
\paragraph{Bridge-Indicator} A graph-based bridgeness scoring algorithm \citep{jensen2015detecting} was used.

\vspace{1mm}
\paragraph{Greedy} The sum of absolute changes in predictions over all labels from an input graph $G$ and a perturbed graph $G'_{-v_i}$ by $v_i$ were leveraged for the importance score of $v_i$ as in Equation (\ref{equ:greadyalg}). 
The perturbation strategy still followed Definition \ref{def:perturb}.

\vspace{1mm}
\paragraph{LIME} LIME \citep{lime} can measure how locally important nearby nodes are to predict the label of an instance. However, LIME is not designed for generating global node-level explanations for unsupervised node embeddings. For a fair comparison, similar to our GRAPH-wGD, we computed MEAN$\{\text{LIME}(v_c|v_i), v_c \in \mathcal{N}(v_i)\}$ by regarding $v_c$ as local features for $v_i$. $\mathcal{N}_{G}(v_i|\psi)$ also returns the neighbors with the same size of neighbor set, $\psi$.

\vspace{1mm}
\paragraph{GNNExplainer} Although GNNExplainer \citep{gnnexplainer} was also not designed for unsupervised node embeddings, we could still use the same objective function to quantify the importance of each node by leveraging the available labels. Thus, the mutual information between predictions from an input graph and a perturbed graph by $v_i$ was leveraged for the importance score of $v_i$. In this case, we assumed that nodes that return low mutual information values were more important.

\vspace{1mm}
\paragraph{GENI} GENI \citep{geni} is a supervised importance score regression model and used for predicting bridgeness scores. 

For input scoring networks, known (true) bridgeness scores were fed for nodes in the training node-set.

\subsection{Hyperparameters and Cross-Validation Settings}

Two embedding methods (DeepWalk and LINE) were used for evaluating our GRAPH-wGD. For learning DeepWalk for Wiki and BlogCatalog, we set all the parameters as in \citep{node2vec} with hierarchical softmax because they experimented with the same datasets. For the Enron and Karate networks, we set the random walk size to 5, window size to 5, and size of the embedding vector to 8. For NeurIPS, all parameters were set as in the Wiki dataset. For learning LINE, we also used the same sizes of embedding vectors. The learning rates for both embeddings were set as in the original papers for all datasets. To obtain the magnitude of the gradient on LINE, the gradient of first-order proximity embedding was exploited. For the perturbed ratio, we set $\alpha=0.5$. When we perturbed the edges, clusters were found from spectral clustering on the learned embedding and the number of clusters was obtained from the number of their class labels. For classification, a fully connected neural network with two layers was used for predictions. The hidden node size was defined as two times the class label size of each dataset. 
	
Regarding node-set splits, the top 40\% of the high-degree nodes were chosen for a testing node set $V_{test}$ (i.e., for rank correlation, NI, PC). Then 10\% of the nodes from the remaining node set were randomly selected for a training node set $V_{train}$ and validation node set $V_{valid}$. We note that the training and validation node sets were used only for Greedy, LIME, GNNExplainer, and GENI, and the results were averaged after three different random training/validation node-set selections. Importance scores from other bases such as Degree, PPR, and Bridgeness were estimated from the graph input. Therefore, after we computed the scores, the results on testing node sets were reported.


\begin{figure}[!t]
     \centering
     \includegraphics[width=0.42\textwidth,trim=50 56 50 50, clip]{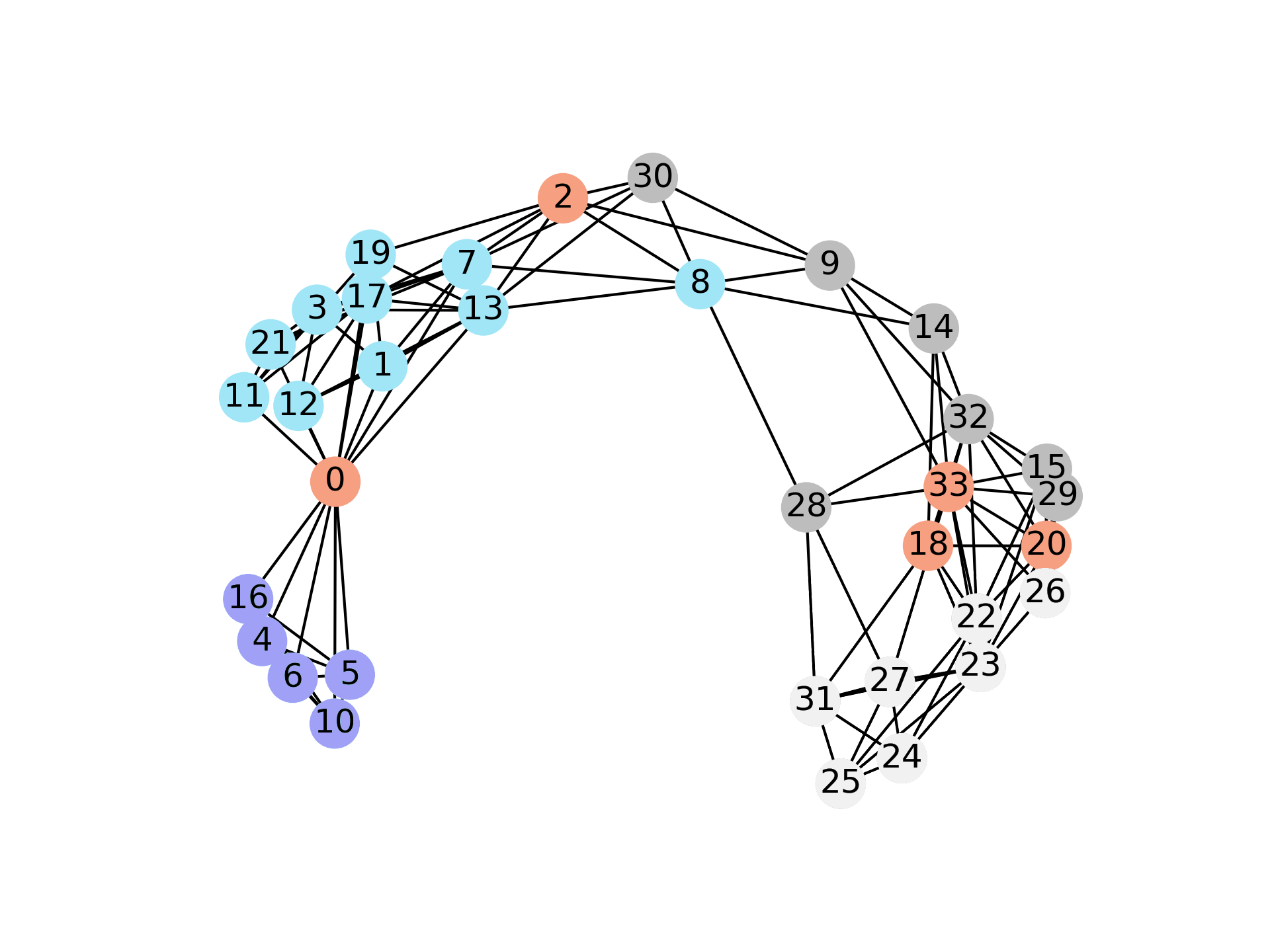}
     \caption{The top-five most-important nodes in DeepWalk embedding by GRAPH-wGD are colored orange. The visualization of clusters is from spectral clustering with t-SNE \cite{tsne}.}     
	 \label{fig:syn:karate:vis}
\end{figure}

\begin{table}[!t]
    	\caption{Spearman's rank correlation in the Karate Club network. (Bold indicates the best score.)} 
	\label{tab:karate:ranking}

    \begin{minipage}[h]{.95\textwidth }
		\centering
        \scalebox{0.9}{             		
		\begin{tabular}{c|cc}
		       & \multicolumn{2}{c}{Karate}  \\
		       & DW       & LINE              \\ \hline
		        Greedy & 0.175 & 0.350 \\
		        LIME & 0.175 & 0.359  \\
		        GNNExplainer & 0.175 & 0.350  \\ 
		        GENI & 0.633 & 0.350  \\ \hline
		        GRAPH-GD & 0.634 & 0.607 \\
		        \textbf{GRAPH-wGD} & \textbf{0.683} & \textbf{0.634} 
		\end{tabular}
        }
        \hrule height 0pt
    \end{minipage}%
\end{table}

\subsection{Experimental Results}
\paragraph{Experiment 1: Rank Correlation Analysis with Karate Graph}

In this experiment, Zachary’s Karate Club network is used to show how the ranking from our methods, GRAPH-GD and GRAPH-wGD, works by measuring rank correlation to the ranking from \textit{Bridgeness} and visualizing top nodes for qualitative analysis. Table \ref{tab:karate:ranking} shows the result of Spearman’s rank correlation coefficient ($\rho$) using two node embeddings, DeepWalk (DW) and LINE. In both embeddings, our GRAPH-wGD shows the best performance in finding bridge nodes, and $p$-values for our methods are less than 0.05 using Student’s $t$-test. 
In Figure \ref{fig:syn:karate:vis}, we can see the visualization of the Karate network with the top-five most important nodes that are found by GRAPH-wGD. Nodes 0, 2, 33, 18, and 20 are chosen as explanations by GRAPH-wGD, and they are serving as bridge nodes on the learned latent space. The result also corresponds to bridge nodes found in other studies (e.g., \citep{wang2011identifying}).
\vspace{1mm}

\begin{table}[!t]
    	\caption{Spearman's rank correlation in Wiki and BlogCatalog. (Bold indicates the best score.)} 
	\label{tab:wikiblog:ranking}

    \begin{minipage}[h]{.95\textwidth }
		\centering
        \scalebox{0.9}{             		
		\begin{tabular}{c|cc|cc}
		       & \multicolumn{2}{c|}{Wiki}       & \multicolumn{2}{c}{BlogCatalog} \\
		       & DeepWalk       & LINE & DeepWalk       & LINE           \\ \hline
		        Greedy & 0.160 & 0.056 & 0.140 & 0.010 \\
		        LIME  & 0.177 & 0.072 & 0.185 & 0.141 \\
		        GNNExplainer  & 0.193 & 0.098 & 0.196 & 0.160 \\ 
		        GENI  & 0.391 & 0.073 & 0.354 & 0.072 \\ \hline
		        GRAPH-GD  & 0.197 & 0.158 & 0.268 & 0.141 \\
		        \textbf{GRAPH-wGD} & \textbf{0.392} & \textbf{0.458} & \textbf{0.368} & \textbf{0.334}
		\end{tabular}
        }
        \hrule height 0pt
    \end{minipage}%
\end{table}

\paragraph{Experiment 2: Correlation analysis and measurement of the effects of found node-level explanations in Wiki and BlogCatalog}

Table \ref{tab:wikiblog:ranking} lists Spearman’s rank correlation to bridge nodes on Wiki and BlogCatalog. In the experiments, our GRAPH-wGD outperforms other methods. In particular, our GRAPH-wGD shows better performance than GRAPH-GD. \textcolor{black}{GRAPH-GD's aggregation method, which sums up the gradient magnitudes from neighboring nodes with equal weights, results in a lower rank correlation compared to GRAPH-wGD.} This indicates that capturing support nodes helps to find bridge nodes. Other prediction probability-based methods, such as Greedy and GNNExplainer, are not highly correlated to \textit{Bridgeness}-based ranking. GNNExplainer and GENI show better correlations than others, but the correlations are still lower than the results from GRAPH-wGD.

In Tables \ref{tab:wiki:rolechange} and \ref{tab:blog:rolechange},  NI is reported to show the number of new neighbors appearing in the neighborhood of a node after perturbing edges of 3\%, 5\%, and 7\% top nodes of each method and re-learning the embedding. Here we note that \textit{Bridgeness} provides theoretical upper bounds to all methods as expected from Theorem \ref{theorem:n2v:bridge}. However, it is computationally intensive to calculate \textit{Bridgeness} and the top nodes identified more efficiently by GRAPH-wGD perform similarly. Our approach GRAPH-wGD produces better results. 

Tables \ref{tab:wiki:predchange} and \ref{tab:blogi:predchange} list the changes in prediction after the graph is perturbed and embedding is relearned. In the tables, 3\%, 5\%, and 7\% of important nodes, which are found from each method, are perturbed to observe the effect w.r.t.\ the change in prediction. Compared with LINE, DeepWalk shows relatively smaller changes because the two datasets are relatively dense and the characteristics of random walks diminish the effect of perturbation.  Again, GRAPH-wGD  shows more changes in neighbors than other alternatives. LIME and GNNExplainer were designed to provide local explanations per specific target nodes, so their local decisions did not find nodes that change the predictions for all other nodes. \textcolor{black}{GRAPH-GD and GRAPH-wGD can effectively identify high bridgeness nodes, as demonstrated in Lemma \ref{lemma:graphgd:bridge}, leading to high Node Importance (NI) and Prediction Change (PC) scores. While GRAPH-wGD enhances the discriminatory power through weight adjustment, both GRAPH-GD and GRAPH-wGD exhibit similar performance in DeepWalk, especially in the BlogCatalog dataset, which has an average degree 6 times higher than Wiki. This allows for many candidate nodes to be fed into the random walks, resulting in good gradient-based signals for learning the node embeddings. However, in the LINE algorithm, which only considers closer neighbors, GRAPH-GD may struggle to effectively handle the variance from noisy neighboring nodes, leading to a lower performance compared to GRAPH-wGD.}

\begin{table*}[!t]
    \begin{minipage}[h]{.95\textwidth }
        \centering
                \caption{NI after the perturbation of the top 3\%, 5\%, and 7\% of nodes in Wiki. \\(Bold indicates the best score and \textit{Bridgeness} provides theoretical upper bounds.)} 
        \label{tab:wiki:rolechange}

         \captionsetup{justification=centering}
        \scalebox{0.9}{     
            \begin{tabular}{c|ccc|ccc}
            & \multicolumn{3}{c|}{DeepWalk}                    & \multicolumn{3}{c}{LINE}                        \\
            & 3\%           & 5\%           & 7\%          & 3\%           & 5\%           & 7\%          \\ \hline
Bridgeness & 0.706 & 0.705 & 0.707 & 0.644 & 0.659 & 0.665 \\  \hline \hline
Degree & 0.260 & 0.295 & 0.290 & 0.375 & 0.399 & 0.411 \\
PPR & 0.284 & 0.284 & 0.216 & 0.381 & 0.405 & 0.421 \\ 
Bridge-Indicator & 0.615 & 0.575 & 0.577 & 0.400 & 0.407 & 0.426 \\ \hline 
Greedy & 0.544 & 0.572 & 0.513 & 0.549 & 0.574 & 0.591 \\ 
LIME & 0.514 & 0.527 & 0.584 & 0.539 & 0.582 & 0.611 \\
GNNExplainer & 0.680 & 0.679 & 0.674 & 0.548 & 0.563 & 0.572 \\ 
GENI & 0.609 & 0.623 & 0.609 & 0.435 & 0.450 & 0.456 \\ \hline
GRAPH-GD & 0.682 & \textbf{0.689} & 0.691 & 0.523 & 0.534 & 0.555 \\
\textbf{GRAPH-wGD} & \textbf{0.689} & \textbf{0.689} & \textbf{0.697} & \textbf{0.564} & \textbf{0.590} & \textbf{0.630} \\

            \end{tabular}

        }            
        \hrule height 0pt
    \end{minipage}%
\end{table*}    
\begin{table*}[!t]
    \begin{minipage}[h]{.95\textwidth }
        \centering
                 \caption{NI after the perturbation of the top 3\%, 5\%, and 7\% of nodes in BlogCatalog. \\(Bold indicates the best score and \textit{Bridgeness} provides theoretical upper bounds.)} 
        \label{tab:blog:rolechange}
        
        \captionsetup{justification=centering}
       
        \scalebox{0.9}{     
            \begin{tabular}{c|ccc|ccc}
                   & \multicolumn{3}{c|}{DeepWalk}                    & \multicolumn{3}{c}{LINE}                         \\
                   & 3\%           & 5\%           & 7\%          & 3\%           & 5\%           & 7\%          \\ \hline
Bridgeness & 0.682 & 0.642 & 0.668 & 0.624 & 0.628 & 0.632 \\ \hline \hline
Degree & 0.422 & 0.475 & 0.504 & 0.346 & 0.351 & 0.359 \\
PPR & 0.328 & 0.350 & 0.383 & 0.343 & 0.349 & 0.357 \\ 
Bridge-Indicator & 0.255 & 0.300 & 0.331 & 0.263 & 0.266 & 0.271 \\ \hline 
Greedy & 0.508 & 0.509 & 0.527 & 0.498 & 0.510 & 0.518 \\
LIME & 0.463 & 0.510 & 0.521 & 0.484 & 0.510 & 0.523 \\
GNNExplainer & 0.443 & 0.521 & 0.546 & 0.497 & 0.509 & 0.517 \\ 
GENI & 0.272 & 0.359 & 0.365 & 0.308 & 0.321 & 0.319  \\ \hline
GRAPH-GD & 0.639 & \textbf{0.654} & 0.659 & 0.487 & 0.488 & 0.502 \\
\textbf{GRAPH-wGD} & \textbf{0.650} & 0.651 & \textbf{0.661} & \textbf{0.510} & \textbf{0.518} &\textbf{0.541} \\

            \end{tabular}
        }            
        \hrule height 0pt

   \end{minipage}%

\end{table*}

\begin{table*}[!t]
    \begin{minipage}[h]{.97\textwidth }
        \centering
         \caption{PC after the perturbation of \\the top 3\%, 5\%, and 7\% of nodes in Wiki. \\(Bold indicates the best score and \textit{Bridgeness} provides theoretical upper bounds.)}
        \label{tab:wiki:predchange}
        \captionsetup{justification=centering}
        \scalebox{0.9}{     
            \begin{tabular}{c|ccc|ccc}
                & \multicolumn{3}{c|}{DeepWalk}                 & \multicolumn{3}{c}{LINE}                         \\
                       & 3\%         & 5\%          & 7\%         & 3\%           & 5\%           & 7\%          \\ \hline
Bridgeness & 2.03 & 1.65 & 1.84 & 37.23 & 38.38 & 38.88 \\ \hline \hline
Degree & 1.00 & 1.46 & 1.41 & 25.28 & 25.55 & 27.44 \\
PPR & 1.12 & 1.04 & 1.17 & 26.45 & 26.86 & 28.00 \\ 
Bridge-Indicator & 1.35 & 1.36 & 1.45 & 34.60 & 34.09 & 37.41 \\ \hline 
Greedy & 1.11 & 1.06 & 1.14 & 29.30 & 29.99 & 33.61 \\
LIME & 1.15 & 1.10 & 1.19 & 25.34 & 32.41 & 37.84 \\
GNNExplainer & 1.35 & 1.37 & 1.42 & 25.94 & 26.57 & 30.63 \\ 
GENI & 1.35 & 1.36 & 1.45 & \textbf{35.26} & 35.47 & 37.44 \\ \hline
GRAPH-GD & 1.45 & 1.09 & 1.25 & 24.31 & 26.94 & 34.06 \\
\textbf{GRAPH-wGD} & \textbf{1.98} & \textbf{1.59} & \textbf{1.68} & 34.25 & \textbf{35.88} & \textbf{43.14} \\

            \end{tabular}
        }            
        \hrule height 0pt
    \end{minipage}%
\end{table*}

\begin{table*}[!t]    
    \begin{minipage}[h]{.97\textwidth}
        \centering
                \caption{PC after the perturbation of the top 3\%, 5\%, and 7\% of nodes in BlogCatalog. \\(Bold indicates the best score and \textit{Bridgeness} provides theoretical upper bounds.)} 
        \label{tab:blogi:predchange}           
	
        \captionsetup{justification=centering}
        \scalebox{0.9}{     
                        \begin{tabular}{c|ccc|ccc}
                               & \multicolumn{3}{c|}{DeepWalk}                 & \multicolumn{3}{c}{LINE}                         \\
                               & 3\%          & 5\%          & 7\%         & 3\%           & 5\%           & 7\%          \\ \hline
Bridgeness & 2.73 & 2.76 & 2.75 & 41.29 & 40.92 & 38.95 \\ \hline \hline
Degree & 2.32 & 2.15 & 2.29 & 37.54 & 36.12 & 38.09 \\
PPR & 2.30 & 2.03 & 2.27 & 38.72 & 39.10 & 37.62 \\ 
Bridge-Indicator & 2.19 & 2.10 & 2.08 & 37.21 & 37.79 & 35.58 \\ \hline 
Greedy & 2.15 & 2.27 & 2.19 & 34.87 & 38.19 & 36.52 \\
LIME & 2.20 & 1.98 & 2.33 & 37.07 & 38.14 & 38.65 \\
GNNExplainer & 2.48 & 2.54 & 2.71 & 37.50 & 37.73 & 38.38 \\ 
GENI & \textbf{2.79} & 2.27 & 2.24 & 39.24 & 39.32 & 37.66 \\ \hline
GRAPH-GD & 2.60 & 2.34 & \textbf{2.77} & 37.42 & 37.78 & 37.17 \\
\textbf{GRAPH-wGD} & 2.57 & \textbf{2.74} & 2.76 & \textbf{40.73} & \textbf{40.98} & \textbf{39.48} \\

            \end{tabular}
        }            
        \hrule height 0pt
 
    \end{minipage}%

\end{table*}


\paragraph{Experiment 3: Finding explanations and measuring the effects of the found nodes in Enron}
In this experiment, we explored how the discovered explanations were associated with organizational positions. By leveraging the known position information of each person in the Enron dataset, we first measured the number of effective explanations from each method for finding people in management positions. Regarding this, we considered management people as important to form the communication graph and found the precision@$z\%$ from the ranking for each method. GRAPH-wGD showed high rank correlations ($>$0.6). In Table \ref{tab:enron:precision}, the precision@$z$(=20\% nodes) results are reported. The result shows that our GRAPH-wGD is also effective in finding people with management roles. This indicates that management roles are also highly correlated with bridging roles in communication. Table \ref{teb:top5:enron} lists the top node-level explanations from \textit{Degree} and GRAPH-wGD. Although \textit{Degree} finds the most active people such as the CEO and Chairman from the graph input view, GRAPH-wGD identifies bridging nodes (i.e., people who worked as VPs and managers) in the learned embedding space as globally important. \textcolor{black}{Our results indicate that, except for F. Sturm, GRAPH-GD shows similar performance in the top-5 ranking compared to degree-based ranking. This suggests that, due to the presence of noisy gradient information around high-degree nodes, GRAPH-GD may struggle in effectively capturing high bridgeness nodes.}

\begin{table}[!t]
    \begin{minipage}[h]{.95\textwidth }
        \centering
        	\caption{Top-five nodes in Enron}
	\label{teb:top5:enron}      

        \scalebox{0.92}{             
        \begin{tabular}{c|l}
        \multicolumn{1}{c}{}       & \multicolumn{1}{c}{Top-5 Nodes}                                                          \\ \hline
        Degree  & \begin{tabular}[c]{@{}l@{}} L. Taylor, S. Beck, J. Lavorato (CEO), \\K. Lay (Chairman), L. Kitchen (President)\end{tabular}     \\ \hline
        \textcolor{black}{GRAPH-GD}  & \begin{tabular}[c]{@{}l@{}}         \textcolor{black}{L. Taylor,  J. Lavorato (CEO), S. Beck,} \\\textcolor{black}{K. Lay (Chairman), F. Sturm (VP)}\end{tabular}     \\ \hline        
        \textbf{GRAPH-wGD} & \begin{tabular}[c]{@{}l@{}} S. Scott, F. Sturm (VP), J. Williams (VP), \\ K. Lay (Chairman), P. Allen (Manager)\end{tabular}
        \end{tabular}
        }
        \hrule height 0pt
    \end{minipage}%

\end{table}

\begin{table}[!t]
    \begin{minipage}[h]{.95\textwidth }
	\centering
	    	\caption{Precision@20\%: finding management nodes in Enron. \\(Bold indicates the best score.)} 
	\label{tab:enron:precision}

    \scalebox{0.90}{             
	\begin{tabular}{c|ll}
	                   & DeepWalk      & LINE         \\ \hline
	Greedy             & 0.25           & 0.30           \\
	LIME               & 0.40           & 0.45           \\
	GNNExplainer       & 0.45           & 0.30          \\ 
	GENI & 0.45 & 0.45 \\ \hline
	GRAPH-GD           & 0.45           & 0.45        \\
	\textbf{GRAPH-wGD} & \textbf{0.60}           & \textbf{0.75}              
	\end{tabular}
    }
    \hrule height 0pt
    \end{minipage}%
    \vspace{6mm}
\end{table}


\paragraph{Experiment 4: Analysis with NeurIPS}

    \begin{table}[!t]
	    \begin{minipage}[h]{.99\textwidth }        
	        \centering
	        \caption{Top-8 Nodes in NeurIPS}
	        \label{tab:top8:neurips}        		    
	        
            \scalebox{0.92}{    
	            \begin{tabular}{c|l}
	            \multicolumn{1}{c}{}       & \multicolumn{1}{c}{Top-8 Nodes}                                                          \\ \hline
	            Degree  & \begin{tabular}[c]{@{}l@{}} G. Hinton, Y. Bengio, M. Jordan, Z. Ghahramani,\\ K. Muller,  B. Scholkop, A. Ng, R. Salakhutdinov\end{tabular}     \\ \hline
	            \textcolor{black}{GRAPH-GD}  & \begin{tabular}[c]{@{}l@{}} \textcolor{black}{Y. Bengio, G. Hinton, M. Jordan, K. Muller,}\\ \textcolor{black}{ A. McCallum, B. Scholkop, A. Ng, E. Xing, Z. Ghahramani}\end{tabular}     \\ \hline
	            \textbf{GRAPH-wGD} & \begin{tabular}[c]{@{}l@{}} R. Rosales , I. Guyon , M. Figueired, C. Scott, \\ M. Zibulevsky, A. McCallum, E. Xing, R. Vogelstein \end{tabular}
	            \end{tabular}
		        }
		        \hrule height 0pt
		    \end{minipage}%
    \end{table}

On the NeurIPS data, we report the top-eight explanations from Table \ref{tab:top8:neurips} in the main text. For GRAPH-wGD with DeepWalk, it finds many interdisciplinary researchers such as R. Rosales (machine learning scientist who works on text mining, computer vision, graphics, and medicine) and I. Guyon (machine learning consultant who works on machine learning theory, handwriting recognition, and biomedical research for genomics, proteomics, cancer). The result is also visualized in Figure \ref{fig:neurips-graphwgd}. This visualization is from the t-SNE algorithm with the default parameters in scikit-learn. Colors in the Figure represent cluster IDs. The names of the top-eight people are placed in their positions and they are located near the center or boundaries of the clusters. This indicates that the people are likely to have bridging roles across research communities. Researchers such as A. McCallum and E. Xing are also identified in the central positions because they have contributed to the foundation of ML/NLP while collaborating in several application domains. Surprisingly, the top \textit{Degree} nodes do not overlap with the top nodes of GRAPH-wGD. This indicates that although they may be the most prolific researchers, they have less global effect on the learned embedding, particularly if they publish only with collaborators who are placed nearby in the embedding space (i.e., less bridging role in the learned embedding). \textcolor{black}{Our results suggest that GRAPH-GD demonstrates a similar ranking to degree-based ranking, indicating that it may be more closely related to degrees rather than bridgeness-based ranking.}

    \begin{figure}[!t]    
    \centering
    \includegraphics[width=0.60\linewidth]{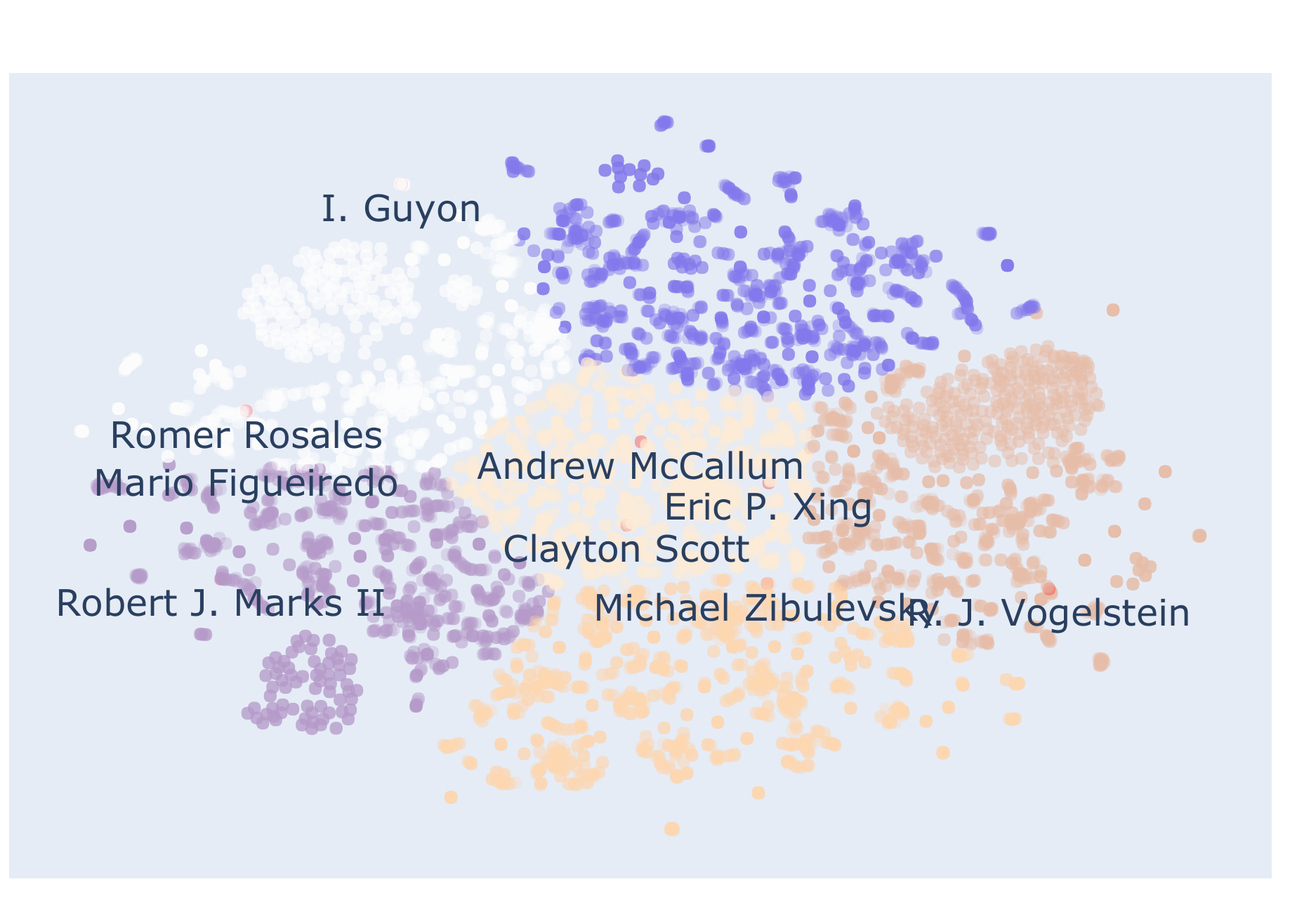}
    \caption{Visualization of the top-eight node-level explanations that are found by GRAPH-wGD for the NeurIPS dataset.}
    \label{fig:neurips-graphwgd}
    \end{figure}

\vspace{1mm}
\paragraph{Ablation Study}

In our proposed GRAPH-wGD, we might use other weighting functions by changing our assumptions w.r.t.\ support nodes. \textcolor{black}{In this section, we compare various modifications of GRAPH-wGD on the Wiki and BlogCatalog datasets, including modifications to the cosine similarity function and the inclusion of other activation functions, such as sigmoid and tanh as}:

	\resizebox{0.94\linewidth}{!}{
	    \begin{minipage}{\linewidth}
	    \allowdisplaybreaks{    \begin{align*}
	            &\textit{GRAPH-wGD (+ and -): } h(v_b, v_i) = 1+cos\left(\vec{w}_b -  \vec{w}_{i},  -\frac { \partial \mathcal{L}(v_b, v_i) }{ \partial \vec{w}_i}\right),\\
	            &\textit{GRAPH-wGD (abs): } h(v_b, v_i) = 1+\left| cos\left(\vec{w}_b -  \vec{w}_{i},  -\frac { \partial \mathcal{L}(v_b, v_i) }{ \partial \vec{w}_i}\right) \right|,\\
	            &\textcolor{black}{\textit{GRAPH-wGD (sigmoid): } h(v_b, v_i) = 1+\sigma\left( \left(\vec{w}_b -  \vec{w}_{i} \right) \cdot  \left(-\frac { \partial \mathcal{L}(v_b, v_i) }{ \partial \vec{w}_i} \right)\right),}\\
	            &\textcolor{black}{\textit{GRAPH-wGD (tanh): } h(v_b, v_i) = 1+tanh\left(\left(\vec{w}_b -  \vec{w}_{i}\right) \cdot  \left(-\frac { \partial \mathcal{L}(v_b, v_i) }{ \partial \vec{w}_i} \right) \right),	}	            
	        \end{align*}
	   }
	    \end{minipage}
	}

\vspace{4mm}
\noindent where \textit{cos} is a cosine similarity function. Here, GRAPH-wGD (+ and -) allows negative angular distance to offset the importance when aggregating scores, and GRAPH-wGD (abs) denotes that negative and positive support nodes are handled in the same way. 
\textcolor{black}{GRAPH-wGD (sigmoid) and GRAPH-wGD (tanh) use sigmoid and tanh functions to calculate the distance between two vectors by computing their dot product values. These functions limit the range of their outputs to [0,1] and [-1,1] respectively. Compared to GRAPH-wGD (abs), GRAPH-wGD (sigmoid) converts negative inputs into positive values but reduces their magnitude GRAPH-wGD (tanh) is similar to GRAPH-wGD (+ and -) but emphasizes the magnitude of negative and positive values close to zero.}
Table \ref{tab:wikiblog:ablation:corr} indicates that our choice of GRAPH-wGD is better than GRAPH-GD and all other variants in Spearman’s rank correlation. This observation also indicates that identifying support nodes improves the identification of bridge nodes.
\textcolor{black}{
In particular, GRAPH-wGD (sigmoid) enhances the performance compared to GRAPH-wGD (abs), however, the performance is still not as good as GRAPH-wGD. 
Similarly, GRAPH-wGD (tanh) also shows worse results, and the results of GRAPH-wGD with two activation functions indicate that the cosine similarity-based measure is effective in capturing bridging roles through adjusting distances.
GRAPH-wGD ([-30\textdegree, 30\textdegree]) only uses the cosine distance when the angular distance between two vectors falls between -30\textdegree and 30\textdegree. This selectively identifies nodes with stronger bridging roles and assigns zeros to others. The degree of filtering can be controlled by the range values [-30\textdegree, 30\textdegree], [-45\textdegree, 45\textdegree], and [-60\textdegree, 60\textdegree]. Our experiment shows that GRAPH-wGD ([-30\textdegree, 30\textdegree]) significantly improves GRAPH-GD. As the range of consideration widens, the performance also improves, suggesting that incorporating the angular distance is effective in identifying bridgeness-based important nodes.}

\begin{table}[t]
    \begin{minipage}[h]{.95\textwidth }
		\centering
		 \caption{Ablation Study: Spearman's Rank Correlation} 
		\label{tab:wikiblog:ablation:corr}     		
		\scalebox{0.85}{      
		\begin{tabular}{c|cc|cc}
           & \multicolumn{2}{c|}{Wiki}       & \multicolumn{2}{c}{BlogCatalog} \\
           & DeepWalk       & LINE           & DeepWalk       & LINE           \\ \hline
            GRAPH-GD & 0.197 & 0.158 & 0.268 & 0.141 \\ \hline
            \textbf{GRAPH-wGD (+ and -)} & 0.292 & 0.122 & 0.195 & 0.181 \\ 
            \textbf{GRAPH-wGD (abs)} & 0.189 & 0.197 & 0.122 & 0.225 \\ 
            \textbf{\textcolor{black}{GRAPH-wGD (sigmoid)}} & \textcolor{black}{0.214} & \textcolor{black}{0.225} & \textcolor{black}{0.145} & \textcolor{black}{0.238} \\ 
            \textbf{\textcolor{black}{GRAPH-wGD (tanh)}} & \textcolor{black}{0.302} & \textcolor{black}{0.142} & \textcolor{black}{0.214} & \textcolor{black}{0.203} \\             \hline                       
            \textbf{\textcolor{black}{GRAPH-wGD ([-30\textdegree{}, 30\textdegree{}])}} & \textcolor{black}{0.313} & \textcolor{black}{0.387} & \textcolor{black}{0.307} & \textcolor{black}{0       .294} \\            
            \textbf{\textcolor{black}{GRAPH-wGD ([-45\textdegree{}, 45\textdegree{}])}} & \textcolor{black}{0.334} & \textcolor{black}{0.412} & \textcolor{black}{0.335} & \textcolor{black}{0.303}	\\
            \textbf{\textcolor{black}{GRAPH-wGD ([-60\textdegree{}, 60\textdegree{}])}} & \textcolor{black}{0.362} & \textcolor{black}{0.435} & \textcolor{black}{0.350} & \textcolor{black}{0.321}	\\\hline
            \textbf{GRAPH-wGD} & \textbf{0.392} & \textbf{0.458} & \textbf{0.368} & \textbf{0.334}		
		\end{tabular}
		}
		\hrule height 0pt
    \end{minipage}%
\end{table}

\begin{table}[t]
    \begin{minipage}[h]{.95\textwidth }
        \centering
    \caption{Elapsed time in Wiki and BlogCatalog} 
    \label{tab:wikiblog:elapsedtime}        
        \scalebox{0.85}{             
            \begin{tabular}{c|ll|ll}
                   & \multicolumn{2}{c|}{Wiki}       & \multicolumn{2}{c}{BlogCatalog} \\
                   & DeepWalk       & LINE           & DeepWalk       & LINE           \\ \hline
                    Bridgeness           & 748.41 sec  &  748.41 sec   &   10,039.59 sec   &   10,039.59 sec    \\ \hline
                    Greedy             & 214.32 hours  &  28.72 hours   &   902.24 hours   &  68.57 hours    \\
                    LIME               & 156.69 days & 12.65 days    &   582.56 days    & 381.03 days        \\
                    GNNExplainer       & 192.20 hours &  29.78 hours   &  850.03 hours     &     70.49   hours \\ 
                    GENI             & 822.97 sec  &  827.47 sec   &   10,345.21 sec   &  10,421.32 sec    \\ \hline
                    \textbf{GRAPH-wGD} & \textbf{23.30 sec} &   \textbf{67.41 sec}   &  \textbf{141.93 sec}     & \textbf{233.44 sec}
            \end{tabular}
            }
        \hrule height 0pt
    \end{minipage}%
\end{table}

\subsection{Runtime Comparison} \label{sec:runtime}

In Table \ref{tab:wikiblog:elapsedtime}, we report the runtime of our method on Wiki and BlogCatalog datasets. Our method was implemented in Python and was executed on an Intel Xeon Gold 6126 CPU@2.60~GHz server with 192~GB RAM. The reported running time was measured by assuming that the code was run by a single process. All the methods were executed by multiple processes and merged in the actual computations. As in Table \ref{tab:wikiblog:elapsedtime}, our algorithm scales considerably compared with other alternatives. In particular, \textit{Greedy}, LIME, and GNNExplainer had to re-learn node embedding every time they need a perturbation. Thus, in the case of DeepWalk with the Skip-gram architecture, it takes $O(|V|\log|V|)$, whereas \textit{Greedy} and GNNExplainer take $O(|V|^2 \log|V|)$ owing to the node-wise perturbations.  Similarly, LIME takes $O(|V|^3 \log|V|)$ to consider additional local neighbor perturbations. For \textit{Bridgeness}, we run both embedding and EVD, which takes $O(|V|^{2.367})$. EVD can be replaced by \citep{garber2016faster} to obtain $O(nnz(A))$, where $nnz(A)$ denotes the number of non-zeros in $A$, but when $A$ is dense, $O(nnz(A))=O({|V|}^2)$. GENI also corresponds to $O(|E|)$ in addition to its bridgeness computation. Meanwhile, our method does not need to learn the graph again and takes only $O(|V|)$.


\section{Conclusion}\label{sec:conclusion}

We demonstrated the theoretical relationship between the \textit{node-level explanation} of node embeddings including DeepWalk and LINE, and \textit{bridgeness}. Two new algorithms, GRAPH-GD and GRAPH-wGD, were also proposed for generating \textit{post-hoc} explanations to the learned embeddings more efficiently. In particular, our GRAPH-wGD exhibited superior performance w.r.t.\ three evaluation measures over other alternatives on five different datasets.


\bibliographystyle{elsarticle-num}
\bibliography{reference}
%

\clearpage
\appendix

\section{Proofs} \label{sec:proofs}

Let $G$ be an input graph with vertices $V$ and let $A$ be the adjacency matrix of $G$. We note that the initial set of clusters is given before perturbation. We use spectral clustering to obtain the set of clusters, $\{C_1$, ..., $C_k\}$, for ease of theoretical analysis for theorems \ref{theorem:eigenvec:bridge} and \ref{theorem:n2v:bridge}. We note that other lemmas and theorems work on arbitrary clusters. There are several objective functions that capture the clusters and we focus on finding eigenvectors to maximize the relaxed normalized association $N_{asso}(C_i|G)= \frac{assoc(C_i, C_i|G)}{assoc(C_i, V|G)}$ \citep{niu2011dimensionality}, which are from the given graph $G$. Here, $assoc(C_i, V|G)$ represents the total connections between $C_i$ and all nodes and we set $assoc(C_i, V|G)=\sum_{s\in C_i, t\in V} A[s,t]$. Similarly, $assoc(C_i, C_i|G)=\sum_{s\in C_i, t\in C_i} A[s,t]$.

Using the Perturb function, we can analyze how $N_{asso}(C_i|G)$ will change after the perturbation. Again, $N_{asso}(C_i|G)= \frac{assoc(C_i, C_i|G)}{assoc(C_i, V|G)}$, which are from the given graph $G$ and its (arbitrary) clusters $C$. 

\begin{lemma}
Given a graph $G$, its set of (arbitrary) clusters $C=\{C_1, ..., C_k\}$ and an arbitrary node $v_b \in V$, let $G'_{-v_b}=$ Perturb($G$, $v_b$, $\alpha=1$) (see  definition \ref{def:perturb} in \ref{sec:proofs}). 
Then $N_{asso}(C_i|G) \leq N_{asso}(C_i|{G'}_{-v_b})$ for all $C_i \subset C$.

\label{lemma:perturb}
\end{lemma}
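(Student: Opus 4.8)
The plan is to exploit the ratio structure of $N_{asso}(C_i|G) = \frac{assoc(C_i, C_i|G)}{assoc(C_i, V|G)}$ and establish the inequality by treating numerator and denominator separately: I would show that the denominator is left unchanged by Perturb while the numerator can only increase. Since with $\alpha=1$ we have $Q = V_{cand}$, so that $G'_{-v_b} = \text{Perturb}(G, v_b, \alpha=1)$ edits only the entries $A[b,b]$, and, for each inter-cluster neighbor $v_q$ of $v_b$, the entries $A[q,q]$, $A[b,q]$, $A[q,b]$, the entire argument reduces to tracking how these few local edits propagate into the two sums defining $N_{asso}$.

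First I would handle the denominator. Writing $assoc(C_i, V|G) = \sum_{s \in C_i} \sum_{t \in V} A[s,t] = \sum_{s \in C_i} d_s$, the claim is that each row sum (degree) $d_s = \sum_t A[s,t]$ is invariant under Perturb. This is precisely the degree/volume-preserving property already noted after Definition \ref{def:perturb}: for every $v_q \in Q$ the update adds $A[b,q]$ to $A[q,q]$ while simultaneously zeroing $A[q,b]=A[b,q]$, so $v_q$'s row sum is unchanged; likewise the self-loop mass $\sum_q A[b,q]$ added to $A[b,b]$ exactly compensates the removed edges $A[b,q]$, so $v_b$'s row sum is unchanged. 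Hence $assoc(C_i, V|G) = assoc(C_i, V|{G'}_{-v_b})$ for every $C_i$.

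For the numerator I would argue that the weight of each removed inter-cluster edge is absorbed into the diagonal of the appropriate cluster. The crucial observation is that an inter-cluster edge $A[b,q]$ (with $v_b$ and $v_q$ in different clusters) never contributes to $assoc(C_i, C_i)$ for any single $C_i$, whereas after perturbation the same weight reappears as the self-loops $A[b,b]$ and $A[q,q]$, each of which does contribute to the within-cluster association of the cluster containing its single endpoint. A short case analysis on the location of $C_i$ then closes it: if $v_b \in C_i$, the self-loop mass $\sum_q A[b,q]$ deposited on $A[b,b]$ raises $assoc(C_i, C_i)$ while the removed edges, being inter-cluster, were absent from it; if $v_b \notin C_i$ but some neighbors $v_q \in C_i$, the self-loops on those $A[q,q]$ raise $assoc(C_i, C_i)$ while the removed $A[b,q]$ were again absent from it; and if $C_i$ meets neither $v_b$ nor any perturbed $v_q$, no edited entry has both indices in $C_i$, so $assoc(C_i, C_i)$ is untouched. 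In every case $assoc(C_i, C_i|G) \le assoc(C_i, C_i|{G'}_{-v_b})$, which together with the invariant denominator yields the claimed inequality.

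The step I expect to require the most care is the self-loop bookkeeping in the numerator: being explicit about the convention under which a self-loop $A[q,q]$ simultaneously contributes once to the degree $d_q$ (as a row-sum term) and to $assoc(C_i, C_i)$, and verifying that the removed inter-cluster edges genuinely played no role in any single cluster's internal association. Once this accounting is pinned down the inequality is immediate, and it is strict exactly when Perturb deposits self-loop weight on a node of $C_i$, i.e.\ when $v_b$ carries inter-cluster edges relevant to $C_i$ (nonzero \emph{bridgeness} with respect to $C_i$).
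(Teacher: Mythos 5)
Your proposal is correct and follows essentially the same route as the paper's proof: the denominator $assoc(C_i, V|\cdot)$ is invariant because the perturbation preserves all degrees, while the numerator $assoc(C_i, C_i|\cdot)$ can only increase because removed inter-cluster edges (which contributed to no single cluster's internal association) reappear as self-loops inside clusters. Your explicit case analysis and self-loop bookkeeping merely spell out details the paper leaves implicit.
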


\begin{proof}[Proof of Lemma \ref{lemma:perturb}]
    By definition, $N_{asso}$ with and without perturbation are written as
    \vspace{-2mm}
    \begin{equation*}
        \resizebox{0.49\hsize}{!}{%
		    $N_{asso}(C_i|G)=\frac{assoc(C_i, C_i|G)}{assoc(C_i, V|G)}, N_{asso}(C_i|G'_{-v_b})=\frac{assoc(C_i, C_i|G'_{-v_b})}{assoc(C_i, V|G'_{-v_b})}.$
		}		    
    \end{equation*}
\vspace{-2mm}
      
\noindent The degree-preserving perturbation does not change the degrees of any vertices, so the denominator does not change (i.e., $assoc(C_i, V|G)=assoc(C_i, V|G'_{-v_b})$).
However, the perturbation can increase the number of edges in $C_i$ (by removing edges to other clusters and replacing them with self-edges), thus the numerator may increase (i.e., $assoc(C_i, C_i|G) \leq assoc(C_i, C_i|G'_{-v_b})$). Therefore, $N_{asso}(C_i|G) \leq N_{asso}(C_i|{G'}_{-v_b})$.
\end{proof}

The following lemma shows that the difference in $N_{asso}$ is maximized after perturbing the node with the largest \textit{bridgeness}. For $N_{asso}$ for multi-clusters, $N_{asso}(C_1, ..., C_k|G)= \sum_{i=1}^k N_{asso}(C_i|G) = \sum_{i=1}^k {\frac{assoc(C_i, C_i|G)}{assoc(C_i, V|G)}}$ 
 
\begin{lemma}
    Let $V$ be the nodes in graph $G$ with (arbitrary) clusters $C=\{C_1, ..., C_k\}$ and $G'_{-v_j}=$ Perturb($G$, $v_j$, $\alpha=1$) (see  definition \ref{def:perturb} in \ref{sec:proofs}). Assume that $\gamma(C_i)=1$ for all $C_i \subset C$ to compute \textit{Bridgeness} in definition \ref{def:bridgenode:bridgeness}. If we define
    \vspace{-2mm}
    \begin{equation*}
        \resizebox{0.51\hsize}{!}{%
            $v_b = \text{arg}\,\max\limits_{v_j \in V}\, (N_{asso}(C_1, ..., C_k | G'_{-v_j}) - N_{asso}(C_1, ..., C_k | G)),$
        }
    \end{equation*}
    \vspace{-2mm}
    \noindent then $v_b$ is the node with highest \textit{Bridgeness}. 
\label{lemma:argmaxdist}
\end{lemma}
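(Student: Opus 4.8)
The plan is to obtain the difference
\[
\Delta N_{asso}(v_j) := N_{asso}(C_1,\ldots,C_k \mid G'_{-v_j}) - N_{asso}(C_1,\ldots,C_k \mid G)
\]
in closed form and then show that, up to a positive multiplicative constant, it equals the inter-cluster degree $d^{inter}_j$, which by Definition \ref{def:bridgenode:bridgeness} with $\gamma \equiv 1$ is exactly $Bridgeness(v_j \mid C, G)$. Maximizing $\Delta N_{asso}$ over $v_j$ will then coincide with maximizing $Bridgeness$.

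First I would reduce the problem to the numerators. Because Perturb (Definition \ref{def:perturb}) is degree/volume-preserving, every denominator $assoc(C_i, V \mid G)$ is invariant under the perturbation of any node; this is precisely the observation already used in the proof of Lemma \ref{lemma:perturb}. Hence the entire difference is carried by the changes in internal association, and I can write $\Delta N_{asso}(v_j) = \sum_{i=1}^{k} \left( assoc(C_i, C_i \mid G'_{-v_j}) - assoc(C_i, C_i \mid G) \right) / assoc(C_i, V \mid G)$.

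Next I would evaluate those numerator changes by tracking the self-edges created in step 3 of Definition \ref{def:perturb}. Fix $v_j \in C_b$; with $\alpha = 1$ every inter-cluster edge $(v_j, v_q)$ is deleted and its weight $A[j,q]$ is added both to $A[j,j]$ and to $A[q,q]$. The self-loops placed on $v_j$ raise $assoc(C_b, C_b)$ by $\sum_q A[j,q] = d^{inter}_j$, while each neighbor $v_q$ lying in a cluster $C_i$ (with $i \neq b$) raises $assoc(C_i, C_i)$ by $A[j,q]$, for a total of $d^{(i)}_j$, the weight of edges running from $v_j$ into $C_i$. Substituting yields the exact expression $\Delta N_{asso}(v_j) = d^{inter}_j / assoc(C_b, V) + \sum_{i \neq b} d^{(i)}_j / assoc(C_i, V)$; since $\sum_{i \neq b} d^{(i)}_j = d^{inter}_j$, the total new internal weight distributed across all clusters is $2\,d^{inter}_j$. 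When the clusters returned by the spectral (normalized-association) routine are volume-balanced, so that $assoc(C_i, V) \equiv \text{vol}$ for all $i$, this collapses to $\Delta N_{asso}(v_j) = 2\,d^{inter}_j / \text{vol}$, which is strictly increasing in $d^{inter}_j$; taking $\arg\max_{v_j}$ then returns the node of highest $Bridgeness$, as claimed.

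The hard part will be the cluster-volume denominators: for genuinely arbitrary clusters the weighted sum above is not monotone in $d^{inter}_j$ alone, because a node whose inter-cluster edges point predominantly at a single low-volume cluster could outscore a node with larger total inter-cluster degree. I would handle this either by invoking the volume-balancing property that normalized-association spectral clustering is designed to enforce, or, more conservatively, by sandwiching $\Delta N_{asso}(v_j)$ between $2\,d^{inter}_j / \text{vol}_{\max}$ and $2\,d^{inter}_j / \text{vol}_{\min}$ and arguing that when the spread $\text{vol}_{\max}/\text{vol}_{\min}$ is small the ranking induced by $\Delta N_{asso}$ agrees with the ranking by $d^{inter}_j$. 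This denominator bookkeeping, rather than the counting of self-loops, is where the real care is needed.
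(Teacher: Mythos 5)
Your derivation follows the same skeleton as the paper's own proof: use the degree-preserving property of Perturb (the content of Lemma \ref{lemma:perturb}) to fix every denominator $assoc(C_i,V)$, then track the self-loops created in step 3 of Definition \ref{def:perturb} to compute the change in each numerator $assoc(C_i,C_i)$. Your bookkeeping is exact and correct: perturbing $v_j\in C_b$ with $\alpha=1$ adds $d^{inter}_j$ to $assoc(C_b,C_b)$ through the self-loops at $v_j$, and adds $d^{(i)}_j$ to each $assoc(C_i,C_i)$, $i\neq b$, through the self-loops at the severed neighbors, giving
\[
\Delta N_{asso}(v_j)=\frac{d^{inter}_j}{assoc(C_b,V|G)}+\sum_{i\neq b}\frac{d^{(i)}_j}{assoc(C_i,V|G)} .
\]
Where you differ from the paper is that the paper never writes this formula. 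Its proof asserts that the difference in $N_{asso}$ ``is exactly the number of inter-cluster edges,'' which is true only of the numerators summed over clusters (they increase by $2\,d^{inter}_j$ in total); the volume denominators are silently dropped. Your formula makes explicit that the ranking induced by $\Delta N_{asso}$ coincides with the ranking by $d^{inter}_j$ only when the cluster volumes $assoc(C_i,V)$ are (approximately) equal: a node whose inter-cluster edges point at a low-volume cluster can outscore a node of strictly larger inter-cluster degree, and even the term $d^{inter}_j/assoc(C_b,V)$ depends on which cluster $v_j$ sits in. So the ``hard part'' you flag is a genuine gap, but it is a gap in the paper's own proof (and, for genuinely arbitrary clusters, in the lemma as stated), not an artifact of your approach.

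Your two proposed repairs are both sensible, with the caveat that each converts the lemma into a conditional statement. Assuming volume-balanced clusters (which the normalized-association objective indeed promotes) makes your closed form collapse to $2\,d^{inter}_j/\mathrm{vol}$ and the conclusion follows immediately. The sandwich
\[
\frac{2\,d^{inter}_j}{\max_i assoc(C_i,V)}\;\le\;\Delta N_{asso}(v_j)\;\le\;\frac{2\,d^{inter}_j}{\min_i assoc(C_i,V)}
\]
is weaker: it recovers the argmax claim only when the volume spread is small relative to the multiplicative gaps between the $d^{inter}_j$ values of the top-ranked nodes, so it should be presented as a sufficient condition rather than a proof of the lemma in full generality. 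Either way, your write-up is more rigorous than the paper's, and stating the volume-balance hypothesis explicitly would strengthen, not weaken, the downstream use of this lemma in Theorem \ref{theorem:eigenvec:bridge}.
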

\vspace{-2mm}

\begin{proof}
By definition, $N_{asso}(C_i|G)=\frac{assoc(C_i, C_i|G)}{assoc(C_i, V|G)}$. Note that in Lemma \ref{lemma:perturb}, when $v_b \in C_i$ has at least one inter-cluster edge and the edges of $v_b$ are perturbed, $N_{asso}(C_i|G'_{-v_b})$ is larger than $N_{asso}(C_i|G)$. The difference is exactly the number of inter-cluster edges, which are all perturbed and replaced with self-edges. Therefore, the node with the largest inter-cluster degree maximizes this difference. Assuming $\gamma(C_i)=1$ in \textit{Bridgeness}($v_b\in C_i|C, G$) (see definition \ref{def:bridgenode:bridgeness}), the node $v_b$ with maximum \textit{Bridgeness} score has the highest number of inter-cluster edges. As a result, $v_b$ also maximizes $N_{asso}(C_i|G'_{-v_b}) - N_{asso}(C_i|G)$. 

\vspace{-4mm}

\end{proof}

\vspace{-2mm}

Now we can derive the node which is a \textit{node-level explanation} w.r.t.\ a spectral embedding (eigenvector $U$ from $D^{-1/2}AD^{-1/2}$), where $D$ is a degree matrix of $A$.

\vspace{2mm}
\noindent \textbf{Theorem \ref{theorem:eigenvec:bridge}.}
Let $\{C_1, ..., C_k\}$  be $k$ clusters from spectral clustering of graph $G$, with nodes $V$. Let $G'_{-v_j}$ be a perturbed version of $G$ using Perturb($G$, $v_j$, $\alpha\!=\!1$) (definition \ref{def:perturb}). Let  $U \in \mathbb{R}^{|V| \times k}$ and $U'_{-v_j} \in \mathbb{R}^{|V| \times k}$ be the eigenvectors of $G$ and ${G'}_{-v_j}$, respectively. Then, \\
\vspace{-2mm}
\begin{equation*}
\resizebox{0.55\hsize}{!}{%
    $\text{arg}\!\,\max\limits_{v_j \in V} \widehat{Imp}(v_j|G, {G'}_{-v_j}, U,{U'}_{-v_j}) = \text{arg}\!\,\max\limits_{v_j \in V} Bridgeness(v_j|C, G)$.
}
\end{equation*}

\begin{proof}
By the definition of NI ($Imp$) of $v_j$ using pairwise distances ($PD$) as in definition \ref{def:pairwisenodeimp}, we can see the relationship between $Imp$ and $N_{asso}$:\\

\resizebox{0.94\linewidth}{!}{
\begin{minipage}{\linewidth}
	\begin{alignat}{1}
	\ & \widehat{Imp}(v_j|G, G'_{-v_j}, U, U'_{-v_j})  = || PD(G, U) - PD(G'_{-v_j}, U'_{-v_j}) || \label{thm1:line2}\\	
	= \ & ||  2\sum_{m=1}^k(I) - 2\sum_{m=1}^k(I)   +   PD(G, U)  - PD(G'_{-v_j}, U'_{-v_j}) || \label{thm1:line4}\\
	= \ & || (2(\sum_{m=1}^k(I) - \frac{1}{2}PD(G'_{-v_j}, U'_{-v_j})) -2(\sum_{m=1}^k(I) - \frac{1}{2}PD(G, U)) || \label{thm1:line6}\\
	= \ & || (2(\sum_{m=1}^k(I) - \frac{1}{2} \sum_{v_i, v_j \in V}  \left\| \frac{U'_{-v_j}[i, 1:k]}{\sqrt{d_i}} - \frac{U'_{-v_j} [j, 1:k]}{\sqrt{d_j}} \right\|^2 ) \label{thm1:line7}\\
	 & \ \ \ \ \ \ \  -2(\sum_{m=1}^k(I) - \frac{1}{2} \sum_{v_i, v_j \in V}  \left\| \frac{U[i, 1:k]}{\sqrt{d_i}} - \frac{U[j, 1:k]}{\sqrt{d_j}} \right\|^2 ) || \label{thm1:line8}\\
	= \ & || (2(\sum_{m=1}^k(I) - \sum_{m=1}^k {U'}^T_{-v_j}[:,m] {L'}_{sym} {U'_{-v_j}}[:,m] ) \label{thm1:line9}\\
	 & \ \ \ \ \ \ \  -2(\sum_{m=1}^k(I) - \sum_{m=1}^k U^T[:,m] L_{sym} U[:,m] ) || \label{thm1:line10}\\
	= \ & 2(\sum_{m=1}^k  {U'}_{-v_j}^T[:,m] {A'}_{sym} {U'}_{-v_j}[:,m]  - \sum_{m=1}^k  U^T[:,m] A_{sym} U[:,m]) \label{thm1:line12}\\
	= \ & 2(N_{asso}(C_1, ..., C_k | G'_{-v_j}) - N_{asso}(C_1, ..., C_k | G)) \label{thm1:line13}
	\end{alignat}
\end{minipage}
}

\vspace{2mm}
	   
\noindent In these equations, lines (\ref{thm1:line2})--(\ref{thm1:line8}) are developed by directly using definition \ref{def:pairwisenodeimp} in the main paper with basic arithmetic operations. Then, lines (\ref{thm1:line9})--(\ref{thm1:line10}) are developed using the following equation as in \citep{von2007tutorial}:
\begin{equation*}
\resizebox{0.45\hsize}{!}{%
	$\frac{1}{2} \sum_{v_i, v_j \in V} \left\| \frac{U_{i, 1:k}}{\sqrt{d_i}} - \frac{U_{j, 1:k}}{\sqrt{d_j}} \right\|^2  = \sum_{m=1}^k U^T_{:,m} L_{sym} U_{:,m},$
}	
\end{equation*}
where $L_{sym}=D^{-1/2}LD^{-1/2}$. We note that $L=D-A$. Line (\ref{thm1:line12}) is also from $A_{sym} = I - L_{sym}$. The last line (\ref{thm1:line13}) is from the objective function of spectral clustering, which finds $U$ for maximizing $\sum_{m=1}^k  U^T[:,m] A_{sym} U[:,m]$ s.t. $U^T U =I$, and it is equivalent to maximizing $N_{asso}(C_1, ..., C_k | G)$ \citep{niu2011dimensionality}. Now we expect that the node importance of $v_j$ corresponds to the difference in $N_{asso}$. 

From Lemma \ref{lemma:argmaxdist}, the difference in $N_{asso}$ under perturbation is maximized by the node that has the highest \textit{bridgeness}. Therefore, the highest-\textit{bridgeness} node has the highest NI.
\end{proof}

Similarly, here we consider the DeepWalk embedding to identify the node having the highest \textit{Importance} (i.e., node-level explanation) under perturbation. Let $A$ be the adjacency matrix of $G$. We use $D_{ii} = \sum_j A_{ij}$ to denote the diagonal degree matrix.

\vspace{4mm}
\noindent \textbf{Theorem \ref{theorem:n2v:bridge}.}
Let $G'_{-v_j}$ represent a perturbed version of graph $G$ with $V$ nodes, using Perturb($G$, $v_j$, $\alpha\!=\!1$) (definition \ref{def:perturb}). Let $W \in \mathbb{R}^{|V| \times k}$ and ${W'}_{-v_j} \in \mathbb{R}^{|V| \times k}$ be embeddings from DeepWalk (with context window size $r$) over $G$ and ${G'}_{-v_j}$, respectively. 
Let $\tilde{C}=\{\tilde{C}_1, ..., \tilde{C}_k\}$ be $k$ clusters from spectral clustering of $\tilde{G}_r$, where $\tilde{G}_r$ is an $r^{th}$ weighted power transformation of $G$ 
that is degree-normalized. 
Then,\\
\vspace{-2mm}
\begin{equation*}
\resizebox{0.59\hsize}{!}{%
    $\text{arg}\!\,\max\limits_{v_j \in V} \widehat{Imp}(v_j|G, {G'}_{-v_j}, W,{W'}_{-v_j}) = \text{arg}\!\,\max\limits_{v_j \in V} Bridgeness(v_j| \tilde{C}, \tilde{G}_r)$.
}
\end{equation*}

\begin{proof} In \citep{qiu2018network}, the DeepWalk could be approximated by the matrix factorization of $\tilde{G}_r$ and $W$ denotes its eigenvectors which correspond to the top-$k$ eigenvalues. 
Therefore, using Theorem \ref{theorem:eigenvec:bridge} with $W$, ${W'}_{-v_j}$, $\tilde{C}_r$, and $\tilde{G}_r$, we can see the relationship between \textit{NI} on DeepWalk embedding and \textit{Bridgeness} as follows:

\begin{equation*}
\resizebox{0.60\hsize}{!}{%
	    $\text{arg}\,\max\limits_{v_j \in V} Bridgeness(v_j|\tilde{C}, \tilde{G}_r)
        =     \text{arg}\,\max\limits_{v_j \in V} \widehat{Imp}(v_j|\tilde{G}, {\tilde{G}^{\prime}}_{-v_j}, W,{W'}_{-v_j}).$
}
\end{equation*}

\noindent In this equation, according to the definition of NI $\widehat{Imp}$ (see definition \ref{def:pairwisenodeimp} in the main paper) takes two embeddings ($W, W'_{-v_j}$), a degree matrix $D$, and a vertex set $V$. Although another graph input is given as $\tilde{G}_r$, we still have the same vertex set $V$. The degree $\tilde{D}_r$ from  $\tilde{G}_r$ also corresponds to the original degree matrix $D$ when $r$ is sufficiently large (i.e., $\tilde{D}_r = D/(2|E|)) \approx D$ \citep{spitzer2013principles}), where $|E|$ is the number of edges in $G$. Therefore, when the same embeddings ($W, W'_{-v_j}$) are provided to the new importance function with $\tilde{G}_r$, we can obtain the following equation:\\

\vspace{-4mm}
\begin{equation*}
\resizebox{0.62\hsize}{!}{%
	    $\text{arg}\,\max\limits_{v_j \in V} \widehat{Imp}(v_j|\tilde{G}, {\tilde{G}^{\prime}}_{-v_j}, W,{W'}_{-v_j}) = \text{arg}\,\max\limits_{v_j \in V} \widehat{Imp}(v_j|G, {G}^{\prime}_{-v_j}, W,{W'}_{-v_j}).$
}
\end{equation*}
\vspace{-4mm} 
\end{proof}

\noindent Based on Theorem \ref{theorem:n2v:bridge}, we can outline how the same node is also likely to have the highest bridgeness score in \\$G$ (i.e.,   $\text{arg}\!\,\max\limits_{v_j \in V} Bridgeness(v_j| \tilde{C}, \tilde{G}_r) \approxeq \text{arg}\!\,\max\limits_{v_j \in V} Bridgeness(v_j| C, G)$) as in Lemma \ref{lemma:n2v:bridge_connect1}, Lemma \ref{lemma:n2v:bridge_connect2}, and Corollary \ref{cor:n2v:bridge}.

\begin{lemma}
Let $G'_{-v_j}$ represent a perturbed version of graph $G$ with $V$ nodes, using Perturb($G$, $v_j$, $\alpha\!=\!1$) (see  definition \ref{def:perturb} in \ref{sec:proofs}). Let $W \in \mathbb{R}^{|V| \times k}$ and ${W'}_{-v_j} \in \mathbb{R}^{|V| \times k}$ be embeddings from DeepWalk (with context window size $r$) over $G$ and ${G'}_{-v_j}$, respectively. 
Let ${C}=\{{C}_1, ..., {C}_k\}$ be $k$ clusters from spectral clustering of $G$. We also have $\tilde{C}=\{\tilde{C}_1, ..., \tilde{C}_k\}$ from spectral clustering of $\tilde{G}_r$, where $\tilde{G}_r$ is an $r^{th}$ weighted power transformation of $G$ 
that is degree-normalized. 
Then,\\
\vspace{-4mm}
\begin{equation*}
\resizebox{0.57\hsize}{!}{%
    $ \text{arg}\!\,\max\limits_{v_j \in V} Bridgeness(v_j| \tilde{C}, \tilde{G}_r) \approxeq \text{arg}\!\,\max\limits_{v_j \in V} Bridgeness(v_j| C, \tilde{G}_r)$.
}
\end{equation*}
\vspace{-4mm}
\label{lemma:n2v:bridge_connect1}
\end{lemma}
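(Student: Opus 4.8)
The plan is to reduce the claim to a single structural fact: spectral clustering applied to $\tilde{G}_r$ returns (up to the relabeling that spectral clustering always permits) the same partition as spectral clustering applied to $G$. This suffices because both sides of the asserted approximate identity evaluate \textit{Bridgeness} on the \emph{same} host graph $\tilde{G}_r$, and with $\gamma(\cdot)=1$ (Definition~\ref{def:bridgenode:bridgeness}) the score $Bridgeness(v_b|\,\cdot\,,\tilde{G}_r)=d^{inter}_{b}$ depends on the cluster set only through the inter-cluster degree of $v_b$. Hence if $\tilde{C}$ and $C$ coincide as partitions of $V$, the two bridgeness functions are identical node-by-node and their argmaxes coincide.

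First I would invoke the characterization from \citep{qiu2018network} under which $\tilde{G}_r$ is a degree-normalized weighted power transformation of $G$: its normalized adjacency operator is a matrix polynomial $f(A_{sym})$ of the normalized adjacency $A_{sym}=D^{-1/2}AD^{-1/2}$, with $f(\lambda)=\tfrac{1}{r}\sum_{t=1}^{r}\lambda^{t}$ (the volume and constant scalings carried by $\tilde{G}_r$ do not affect eigenvectors). Because $f$ is a polynomial, $f(A_{sym})$ and $A_{sym}$ are simultaneously diagonalizable: they share the identical orthonormal eigenbasis $u_1,\dots,u_{|V|}$, and the eigenvalue attached to $u_i$ is merely remapped from $\lambda_i$ to $f(\lambda_i)$. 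Since the spectral clustering used throughout these proofs selects the top-$k$ eigenvectors maximizing $\sum_{m}U^{T}[:,m]A_{sym}U[:,m]$ (as in the proof of Theorem~\ref{theorem:eigenvec:bridge}), the partitions $C$ and $\tilde{C}$ can differ only if $f$ reorders which eigenvectors occupy the top $k$ positions.

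Next I would show that this reordering does not touch the cluster-defining eigenvectors. For a well-separated graph the top-$k$ eigenvalues of $A_{sym}$ cluster near $1$ and are positive, and on $[0,1]$ the map $\lambda\mapsto\tfrac{1}{r}\sum_{t=1}^{r}\lambda^{t}$ is strictly increasing, hence order-preserving. Under the standard spectral-gap assumption separating these $k$ leading eigenvalues from the remainder, the same $k$ eigenvectors therefore remain the top $k$ after the transformation, so $\tilde{C}=C$ up to permutation of cluster labels. Combining this with the reduction above yields $Bridgeness(v_b|\tilde{C},\tilde{G}_r)=Bridgeness(v_b|C,\tilde{G}_r)$ for every $v_b$, and the claimed approximate equality of argmaxes follows.

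The hard part is precisely the order-preservation step. The spectrum of $A_{sym}$ lies in $[-1,1]$ and may contain eigenvalues near $-1$ (e.g.\ for nearly bipartite substructure), while $f$ need not be monotone across the full interval; an even-power term could in principle raise a large-magnitude negative eigenvalue in the transformed ordering. This is exactly why the statement asserts only $\approxeq$ rather than equality: I would make explicit a positivity/eigen-gap condition on the $k$ leading eigenvalues, confine any possible reordering to the bottom of the spectrum where it cannot alter the cluster-carrying eigenvectors, and account for the residual membership mismatch as the source of the approximation.
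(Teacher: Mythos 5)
Your overall strategy coincides with the paper's: both reduce the claim to showing that spectral clustering of $\tilde{G}_r$ and of $G$ yield (approximately) the same partition, noting that with $\gamma(\cdot)=1$ the quantity $Bridgeness(\cdot\,|\,\cdot,\tilde{G}_r)$ depends on the cluster set only through inter-cluster degrees, and both rest on the fact that the polynomial $\sum_{t=1}^{r}(A_{sym})^{t}$ shares its eigenvectors with $A_{sym}$. However, there is a genuine gap at the first step of your argument. The operator associated with $\tilde{G}_r$ is \emph{not} a polynomial in $A_{sym}$: as in the paper's proof (following \citep{qiu2018network}), $\tilde{A}^r_{sym} = \left(\sum_{t=1}^{r}(D^{-1}A)^{t}\right)D^{-1} = D^{-1/2}\left(\sum_{t=1}^{r}(A_{sym})^{t}\right)D^{-1/2}$, i.e., a two-sided \emph{diagonal congruence} of the polynomial. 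Your parenthetical dismissal --- ``the volume and constant scalings carried by $\tilde{G}_r$ do not affect eigenvectors'' --- treats this as a scalar scaling, but a congruence $M \mapsto D^{-1/2}MD^{-1/2}$ preserves eigenvectors only when $D$ is a multiple of the identity, i.e., only for regular graphs. For non-regular graphs, $\tilde{A}^r_{sym}$ and $A_{sym}$ are not simultaneously diagonalizable, so the premise on which your entire argument rests fails. The paper's proof is devoted precisely to this point: it writes $\tilde{A}^r_{sym} = \bar{U}\left(\sum_{t=1}^{r}\Lambda^{t}\right)\bar{U}^{T}$ with $\bar{U} = D^{-1/2}U$, and argues that the discrepancy is only a row rescaling of the eigenvector matrix, which is absorbed by the row-normalization performed in the rounding step of spectral clustering (normalize rows, then $k$-means), so the resulting discrete partitions approximately coincide --- this is where the paper's $\approxeq$ comes from. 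Any repair of your proof must add this (or an equivalent) step.

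On the other hand, your second concern --- that $f(\lambda)=\sum_{t=1}^{r}\lambda^{t}$ is not monotone on $[-1,1]$, so the power transformation could in principle reorder which $k$ eigenvectors are ``top'' (e.g., promote large-magnitude negative eigenvalues through even powers) --- is a real subtlety that the paper's proof silently ignores: it implicitly assumes the same $k$ columns of $U$ carry the clusters before and after the transformation. Your proposed positivity/eigen-gap hypothesis is a legitimate way to close that hole. In short, your attempt attends to a subtlety the paper skips while skipping the subtlety the paper attends to; a complete argument needs both the eigenvalue-ordering condition you describe and the congruence/row-normalization step you omitted.
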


\begin{proof}
\noindent First, adjacency matrices of $G$ and $\tilde{G}_r$ are given as $A_{sym}$ and $\tilde{A}_{sym}^r= \left(\sum_{t=1}^r\nolimits (D^{-1}A)^{t}\right)D^{-1}$, respectively \citep{qiu2018network}. Again, $D_{ii} = \sum_j A_{ij}$. Then, we can find their relationship between $\tilde{A}_{sym}^r$ and ${A}_{sym}$ as follows: 

    \vspace{-2mm}

    \resizebox{0.97\linewidth}{!}{
\begin{minipage}{\linewidth}
	\begin{alignat}{1}
	\tilde{A}^r_{sym} &= \left(\sum_{t=1}^r\nolimits (D^{-1}A)^{t}\right)D^{-1}  \nonumber \\ 
	&=D^{-1/2} \sum_{t=1}^r\nolimits (D^{-1/2}A D^{-1/2})^{t} D^{-1/2} \label{eq:twoasyms}    \\
        &=D^{-1/2} \sum_{t=1}^r\nolimits (A_{sym})^{t} D^{-1/2}. \nonumber
	\end{alignat}
\end{minipage}
}

\vspace{1mm}

\noindent In particular, the relationship between $A_{sym}$ and $\sum_{t=1}^t (A_{sym})^{t}$ w.r.t. eigendecompositions could be identified as follows:

    \resizebox{1.00\linewidth}{!}{
\begin{minipage}{\linewidth}
	\begin{alignat}{1}
	&A_{sym} = U  \Lambda U^T \nonumber \\
	&\sum_{t=1}^t (A_{sym})^{t} = U (\sum_{r=1}^T \Lambda^{r}) U^T. \label{eq:eigen:asym}	
	\end{alignat}
\end{minipage}
}
\vspace{1mm}

\noindent In Equation (\ref{eq:eigen:asym}), the eigenvectors of $A_{sym}$ and $\sum_{t=1}^t (A_{sym})^{t}$ are equivalent. We note that the spectral clusters are found by the eigenvectors of adjacency matrix with a discrete partitioning using a rounding step \citep{von2007tutorial}. For example, the rounding step is composed of the normalization of eigenvectors and apply $k$-means \citep{ng2002spectral}. When we replace $\sum_{t=1}^t (A_{sym})^{t}$ of Equation (\ref{eq:twoasyms}) by its eigendecomposition in Equation
(\ref{eq:eigen:asym}), we obtain

\begin{equation}
\vspace{-8mm}
\resizebox{0.45\hsize}{!}{%
	    $\tilde{A}^r_{sym} = D^{-1/2} U (\sum_{r=1}^T \Lambda^{r}) U^T D^{-1/2} = \bar{U} (\sum_{r=1}^T \Lambda^{r}) \bar{U}^{T}.$
}
	\label{eq:eigen:approximation}
\end{equation}
\vspace{6mm}

\noindent Using this observation, we can see that $\bar{U}$ is the normalized eigenvector matrix, which could be the part of the rounding step. When it leverages the same normalization, its final discrete partitions are not likely to be different from spectral clusters from $\tilde{A}_{sym}$ and $\sum_{t=1}^t (A_{sym})^{t}$. Therefore, we can obtain $\tilde{C} \approxeq C$ in spectral clustering, which implies that $ \text{arg}\!\,\max\limits_{v_j \in V} Bridgeness(v_j| \tilde{C}, \tilde{G}_r) \approxeq \text{arg}\!\,\max\limits_{v_j \in V} Bridgeness(v_j| C, \tilde{G}_r)$. 
\end{proof}

\clearpage

\begin{lemma}
Let $G'_{-v_j}$ be a perturbed version of graph $G$ with $V$ nodes, using Perturb($G$, $v_j$, $\alpha\!=\!1$) (see  definition \ref{def:perturb} in \ref{sec:proofs}). Here $W \in \mathbb{R}^{|V| \times k}$ and ${W'}_{-v_j} \in \mathbb{R}^{|V| \times k}$ are embeddings from DeepWalk (with context window size $r$) over $G$ and ${G'}_{-v_j}$, respectively. 
Let ${C}=\{{C}_1, ..., {C}_k\}$ be $k$ clusters from spectral clustering of $G$. We also have $\tilde{C}=\{\tilde{C}_1, ..., \tilde{C}_k\}$ from spectral clustering of $\tilde{G}_r$, where $\tilde{G}_r$ is an $r^{th}$ weighted power transformation of $G$ that is degree-normalized. Then,

\vspace{-4mm}
\begin{equation*}
\resizebox{0.55\hsize}{!}{%
    $ \text{arg}\!\,\max\limits_{v_j \in V} Bridgeness(v_j| C, \tilde{G}_r) \approxeq \text{arg}\!\,\max\limits_{v_j \in V} Bridgeness(v_j| C, {G}_r)$.
}
\end{equation*}
\label{lemma:n2v:bridge_connect2}
\end{lemma}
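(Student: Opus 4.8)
The plan is to reduce both arg-maxes to weighted inter-cluster degrees and then show that passing from the degree-normalized graph $\tilde{G}_r$ to the unnormalized graph $G_r$ is a per-node rescaling that leaves the set of cross-cluster neighbors intact and, under the large-$r$ stationarity already used in the proof of Theorem~\ref{theorem:n2v:bridge}, does not reorder the top node. Since the clustering $C$ is fixed on both sides, the only thing that changes between the two bridgeness scores is the graph on which the inter-cluster degree is measured.

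First, because we fix $\gamma(C_b)=1$ in Definition~\ref{def:bridgenode:bridgeness}, bridgeness on any weighted graph $H$ with adjacency $A_H$ collapses to the weighted inter-cluster degree $Bridgeness(v_b\in C_b\mid C,H)=\sum_{v_j\notin C_b}A_H[b,j]$. I would then carry over the identity established in the proof of Lemma~\ref{lemma:n2v:bridge_connect1} (Equation~\ref{eq:twoasyms}), namely $\tilde{A}^r_{sym}=D^{-1/2}\big(\sum_{t=1}^{r}(A_{sym})^{t}\big)D^{-1/2}$, and identify $G_r$ as the graph whose adjacency is the unnormalized power core $A_{G_r}=\sum_{t=1}^{r}(A_{sym})^{t}$, so that $\tilde{G}_r$ is precisely its $D^{-1/2}\cdot D^{-1/2}$ normalization. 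Entrywise this gives $\tilde{A}^r_{sym}[b,j]=A_{G_r}[b,j]/\sqrt{d_b d_j}$, hence $\tilde{d}^{\,inter}_b=\tfrac{1}{\sqrt{d_b}}\sum_{v_j\notin C_b}A_{G_r}[b,j]/\sqrt{d_j}$ while $d^{\,inter}_b=\sum_{v_j\notin C_b}A_{G_r}[b,j]$. The crucial structural observation is that the $D^{-1/2}\cdot D^{-1/2}$ sandwich only rescales entries and can neither create nor destroy a nonzero, so the set of inter-cluster pairs $(b,j)$ being summed is identical in $\tilde{G}_r$ and $G_r$.

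Next I would argue arg-max preservation. Because the random-walk powers $(D^{-1}A)^{t}$ converge to the stationary row $\pi_j=d_j/(2|E|)$, the degree matrix of $\tilde{G}_r$ satisfies $\tilde{D}_r\approx D/(2|E|)$ for sufficiently large $r$, the same fact invoked in Theorem~\ref{theorem:n2v:bridge}. Under this regime the stationary part of $A_{G_r}[b,j]$ contributes a node-independent bias, while the structure-bearing deviation from stationarity, which carries the cluster information, drives the ranking; the normalizers $1/\sqrt{d_b}$ and $1/\sqrt{d_j}$ then act approximately proportionally on entries that already scale with the neighbor degrees, so the node that accumulates the largest raw cross-cluster weight in $G_r$ remains the node that maximizes $\tilde{d}^{\,inter}_b$ in $\tilde{G}_r$. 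This yields $\text{arg}\max_{v_b}\tilde{d}^{\,inter}_b\approxeq\text{arg}\max_{v_b}d^{\,inter}_b$, which is exactly the assertion $\text{arg}\max_{v_j\in V}Bridgeness(v_j\mid C,\tilde{G}_r)\approxeq\text{arg}\max_{v_j\in V}Bridgeness(v_j\mid C,G_r)$.

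The main obstacle is precisely this last step: the arg-max is taken through the nonlinear, per-node degree rescaling $1/\sqrt{d_b d_j}$, so the identity is only approximate, which is why the claim is stated with $\approxeq$ rather than equality. Making it rigorous requires bounding how much the normalization can perturb the two inter-cluster sums relative to the gap between the top node and its competitors. I would handle this, consistent with the rest of the paper, by working in the large-$r$ regime where $\tilde{D}_r\approx D/(2|E|)$ renders the distortion close to a global constant, and by assuming the highest-bridgeness node is a dominant bridge whose lead in raw inter-cluster accumulation exceeds the bounded rescaling, so that the maximizer cannot be flipped.
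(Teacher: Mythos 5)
There is a genuine gap, and it stems from what ``$G_r$'' on the right-hand side must mean for the lemma to do its job. In the paper, Lemma~\ref{lemma:n2v:bridge_connect2} is consumed by Corollary~\ref{cor:n2v:bridge}, whose conclusion is about $Bridgeness(v_j|C,G)$ on the \emph{original} graph $G$, and the paper's own proof of the present lemma accordingly concludes $\text{arg}\,\max_{v_j \in V} Bridgeness(v_j|C,\tilde{G}) \approxeq \text{arg}\,\max_{v_j \in V} Bridgeness(v_j|C,G)$; the subscript ``$r$'' in the statement is effectively a typo for $G$. You instead define $G_r$ as the graph with adjacency $\sum_{t=1}^{r}(A_{sym})^{t}$, i.e., $\tilde{G}_r$ with only the outer $D^{-1/2}(\cdot)D^{-1/2}$ sandwich removed. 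Under that reading, your entrywise-rescaling and support-preservation observations are correct, but they prove only the easy half: undoing a positive diagonal rescaling. The substantive content of the lemma --- that the multi-hop accumulation in the $r$-th power transformation does not reorder the top inter-cluster-degree node relative to the \emph{single-hop} edges of $G$ itself --- is never touched by your argument. A node with few direct inter-cluster edges in $G$ can accumulate large inter-cluster weight in $\sum_{t}(A_{sym})^{t}$ through many short cross-cluster paths, so the link from your $G_r$ back to $G$ is exactly where the difficulty lies. The paper bridges this (informally) by a different route: it fixes the cluster assignment $C$ on both sides and appeals to random-walk clustering results \citep{yen2005clustering,pons2006computing} to argue that the power transformation enlarges inter-cluster distances roughly uniformly, so the ranking by inter-cluster degree is approximately preserved between $G$ and $\tilde{G}_r$.

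A secondary issue: your stationarity argument cuts against you. In the large-$r$ regime you invoke (where $\tilde{D}_r \approx D/(2|E|)$, as in the proof of Theorem~\ref{theorem:n2v:bridge}), the entries of $\tilde{A}^r_{sym}$ flatten toward a node-independent constant, so \emph{all} ranking information lives in the sub-stationary (spectral-gap) corrections --- which you acknowledge but do not control; the sentence claiming that the node with the largest raw cross-cluster weight in $G_r$ remains the maximizer in $\tilde{G}_r$ is therefore an assumption, not a deduction. To be fair, the paper's own proof is no more rigorous (the claim is stated with $\approxeq$ and argued heuristically), and your explicit dominant-bridge hypothesis is an honest way to close such a statement. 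But as written, your proof establishes a relation between $\tilde{G}_r$ and the intermediate power graph, not the relation to $G$ that Corollary~\ref{cor:n2v:bridge} requires, and that missing link is the actual content of the lemma.
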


\vspace{-6mm}
\begin{proof}
We expect that the inter-cluster distances of clusters in $\tilde{G}_r$ are larger than those in $G$ \citep{yen2005clustering,pons2006computing}. Meanwhile, when their clusters are the same, they are likely to have similar inter-cluster distances and their rankings in inter-cluster degrees of $G$ and $\tilde{G}_r$ also become similar, which means $ \text{arg}\!\,\max\limits_{v_j \in V} Bridgeness(v_j| C, \tilde{G}) \approxeq \text{arg}\!\,\max\limits_{v_j \in V} Bridgeness(v_j| C, G)$.	
\end{proof}

\begin{corollary}
Let $G'_{-v_j}$ be a perturbed version of $G$ with $V$ nodes, using Perturb($G$, $v_j$, $\alpha\!=\!1$) (see  definition \ref{def:perturb} in \ref{sec:proofs}). Let $W \in \mathbb{R}^{|V| \times k}$ and ${W'}_{-v_j} \in \mathbb{R}^{|V| \times k}$ be embeddings from DeepWalk (with context window size $r$) over $G$ and ${G'}_{-v_j}$, respectively. 
Let ${C}=\{{C}_1, ..., {C}_k\}$ be $k$ clusters from spectral clustering of $G$. We also have $\tilde{C}=\{\tilde{C}_1, ..., \tilde{C}_k\}$ from spectral clustering of $\tilde{G}_r$, where $\tilde{G}_r$ is an $r^{th}$ weighted power transformation of $G$ 
that is degree-normalized. 
Then,
\vspace{-2mm}
\begin{equation*}
\resizebox{0.55\hsize}{!}{%
    $\text{arg}\!\,\max\limits_{v_j \in V} Bridgeness(v_j| \tilde{C}, \tilde{G}_r) \approxeq \text{arg}\!\,\max\limits_{v_j \in V} Bridgeness(v_j| C, G)$.
}
\end{equation*}
\vspace{-2mm}
\label{cor:n2v:bridge}
\end{corollary}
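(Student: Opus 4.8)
The plan is to obtain the corollary purely by composing the two preceding lemmas through the intermediate quantity $Bridgeness(v_j \mid C, \tilde{G}_r)$, in which the graph is the power-transformed $\tilde{G}_r$ but the clusters are those of the original $G$. Concretely, I would chain
\[
\text{arg}\max_{v_j} Bridgeness(v_j \mid \tilde{C}, \tilde{G}_r)
\;\approxeq\;
\text{arg}\max_{v_j} Bridgeness(v_j \mid C, \tilde{G}_r)
\;\approxeq\;
\text{arg}\max_{v_j} Bridgeness(v_j \mid C, G),
\]
where the first approximate equality is exactly Lemma \ref{lemma:n2v:bridge_connect1} and the second is exactly Lemma \ref{lemma:n2v:bridge_connect2}. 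No new computation is required beyond checking that the two endpoints of this chain are literally the two sides of the claimed corollary.

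First I would invoke Lemma \ref{lemma:n2v:bridge_connect1} to swap the cluster set from $\tilde{C}$ to $C$ while holding the graph fixed at $\tilde{G}_r$. The force of that lemma is the identity in Eq. (\ref{eq:eigen:asym}): $A_{sym}$ and $\sum_{t} (A_{sym})^{t}$ share the same eigenvectors, and after the degree normalization appearing in Eq. (\ref{eq:eigen:approximation}) the rounding step (normalization followed by $k$-means) returns essentially the same discrete partition, so $\tilde{C} \approxeq C$ and the induced bridgeness orderings agree. Next I would invoke Lemma \ref{lemma:n2v:bridge_connect2} to swap the graph from $\tilde{G}_r$ back to $G$ while holding the clusters fixed at $C$; here the justification is that the power transformation only dilates inter-cluster distances and, with a fixed partition, leaves the \emph{ranking} of inter-cluster degrees (and hence of $Bridgeness$ under $\gamma \equiv 1$) unchanged.

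The main obstacle, and the reason the statement is written with $\approxeq$ rather than equality, is that approximate argmaxes do not compose for free: each lemma only guarantees that the maximizing node is \emph{approximately} preserved, so I would need both perturbations to be simultaneously small enough that the top-ranked node does not drift across the chain. Since $Bridgeness$ depends on both the cluster assignment and the inter-cluster degree, the delicate point is that stability of the partition (Lemma \ref{lemma:n2v:bridge_connect1}) and stability of the inter-cluster degree ranking (Lemma \ref{lemma:n2v:bridge_connect2}) must hold jointly for the \emph{same} node. The cleanest way to present this is to observe that the cluster set $C$ is common to both intermediate expressions, so the only quantity that must be controlled is whether a near-tie in inter-cluster degree could be reversed by the power transformation. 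Under the paper's running assumptions (well-separated spectral clusters, and a context window $r$ large enough that $\tilde{D}_r \approx D$), such a reversal of the top entry does not occur, and the chain closes to yield the corollary.
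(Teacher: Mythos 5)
Your proposal is correct and takes essentially the same route as the paper: the paper's own proof of Corollary \ref{cor:n2v:bridge} is precisely the composition of Lemma \ref{lemma:n2v:bridge_connect1} and Lemma \ref{lemma:n2v:bridge_connect2} through the intermediate quantity $Bridgeness(v_j \mid C, \tilde{G}_r)$, exactly as you chain them. Your added caveat about why the two approximate argmax preservations must hold jointly for the same node is a more careful treatment than the paper gives, but it does not change the approach.
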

\vspace{-6mm}

\begin{proof}
Using Theorem \ref{theorem:n2v:bridge}, Lemma \ref{lemma:n2v:bridge_connect1}, and Lemma \ref{lemma:n2v:bridge_connect2}, the highest-bridgeness node with $\tilde{C}_r$ and $\tilde{G}_r$ equals the highest-bridgeness node with $C$ and $G$ as follows:
\vspace{-2mm}
\begin{equation*}
\resizebox{0.55\hsize}{!}{%
    $\text{arg}\!\,\max\limits_{v_j \in V} Bridgeness(v_j| \tilde{C}, \tilde{G}_r) \approxeq \text{arg}\!\,\max\limits_{v_j \in V} Bridgeness(v_j| C, G)$.
}
\end{equation*}

\end{proof}

Using Theorem \ref{theorem:n2v:bridge} and Corollary \ref{cor:n2v:bridge}, we can present the relationship between NI on DeepWalk and \textit{Bridgeness} as

\begin{equation*}
\resizebox{0.59\hsize}{!}{%
	$\text{arg}\,\max\limits_{v_j \in V} \widehat{Imp}(v_j|G, {{G}^{\prime}}_{-v_j}, W,{W'}_{-v_j}) 
        \approxeq \ \text{arg}\,\max\limits_{v_j \in V} Bridgeness(v_j|C, G).$
}	
\end{equation*}

\noindent Now we provide proofs to show the relationship of GRAPH-GD and GRAPH-wGD to \textit{Bridgeness}. First, based on the definition \ref{def:goodembedding}, the following analysis shows that GRAPH-GD enables the identification of bridge nodes $v_b$, compared with any node $v_c$ that has the same degree but lower \textit{bridgeness}.  Notation ($W$, $\mathcal{N}_{G}$, $\mathcal{L}_{\mathcal{M}}$) in GRAPH-GD does not change in this analysis, so they are dropped in the following. We assume that all neighbors are returned from $\mathcal{N}_{G}(\cdot|\psi)$ in the following analysis of GRAPH-GD and GRAPH-wGD and we call $\mathcal{N}_G(\cdot)$.

\vspace{2mm}
\noindent \textbf{Lemma \ref{lemma:graphgd:bridge}.} 
Let $G$ be a graph with a set of (arbitrary) clusters $C$ and associated embedding $W$. Let $v_b \in C_b$ be a bridge node and $v_c \in C_b$ be a node with the same degree as $v_b$, but fewer inter-cluster edges (thus, lower \textit{bridgeness}). Here $\text{B-Cluster}(v_b)$ denotes the set of clusters that are bridged by $v_b$. Assume that $\text{B-Cluster}(v_b) \supseteq \text{B-Cluster}(v_c)$ and that $W$ is a {\em good embedding} as in Definition \ref{def:goodembedding}. Then, GRAPH-GD($v_b$) $>$ GRAPH-GD($v_c$).

\vspace{-2mm}
\begin{proof}
First, we give the definition of GRAPH-GD again and GRAPH-GD is presented by using the distance function $dist$. Here, $dist$ is used for measuring the distance between two nodes on the embedding space. By assuming that the distance corresponds to the gradient update of $\left| \frac { \partial \mathcal{L}_{\mathcal{M}}(v_b, v_i) }{ \partial \vec{w}_i}\right|$, we can reformulate the GRAPH-GD of $v_b$ as follows:

	\begin{equation*}
		\resizebox{0.72\hsize}{!}{%
        $\text{GRAPH-GD}(v_b) = \text{MEAN}\left\{\left| \frac { \partial \mathcal{L}_{\mathcal{M}}(v_b, v_i) }{ \partial \vec{w}_i}\right|, v_i \in \mathcal{N}_G(v_b)\right\} = \text{MEAN}\{dist(\vec{w}_b, \vec{w}_i), v_i \in \mathcal{N}_G(v_b))\}.$
		}	
    \end{equation*}

\noindent From the definition of GRAPH-GD, we can decompose GRAPH-GD($v_b$) using cluster memberships as follows:

\vspace{-1mm}
    
\resizebox{0.95\linewidth}{!}{
\begin{minipage}{\linewidth}
	\begin{alignat}{1}
	&\text{GRAPH-GD}(v_b) =  \text{MEAN}\{dist(\vec{w}_b, \vec{w}_i), v_i \in \mathcal{N}_G(v_b)\}  \nonumber \\
        &=\text{MEAN}(\{dist(\vec{w}_b, \vec{w}_i), v_i \in \mathcal{N}_G(v_b) \cap  C_b \} + \nonumber \\ 
        &\{dist(\vec{w}_b, \vec{w}_k), v_k \in \mathcal{N}_G(v_b) \cap  C/C_b \}) \nonumber
	\end{alignat}
\end{minipage}
}

\vspace{4mm}

\noindent Meanwhile, GRAPH-GD($v_c$) is also described as
    \vspace{-1mm}
    
\resizebox{0.95\linewidth}{!}{
\begin{minipage}{\linewidth}
	\begin{alignat}{1}
	&\text{GRAPH-GD}(v_c) =  \text{MEAN}\{dist(\vec{w}_c, \vec{w}_i), v_i \in \mathcal{N}_G(v_c)\}  \nonumber \\
        &=\text{MEAN}(\{dist(\vec{w}_c,  \vec{w}_i), v_i \in \mathcal{N}_G(v_c) \cap  C_b \} + \nonumber \\ 
        &\{dist(\vec{w}_c, \vec{w}_k), v_k \in \mathcal{N}_G(v_c) \cap  C/C_b \}) \nonumber
	\end{alignat}
\end{minipage}
}
\vspace{3mm}

\noindent In these equations, $v_b$ has more inter-cluster edges and $|\{\mathcal{N}_G(v_b) \cap  C/C_b\}| $ $>$ $|\{\mathcal{N}_G(v_c) \cap  C/C_b\}|$. Under the \textit{good embedding} assumption, we can expect larger distance values for the inter-cluster edges in $W$. Therefore, when the degrees of $v_b$ and $v_c$ are the same and $\text{B-Cluster}(v_b) \supseteq \text{B-Cluster}(v_c)$, $\text{GRAPH-GD}(v_b)$ is expected to have more greater distance values on average and we can obtain $\text{GRAPH-GD}(v_b) - \text{GRAPH-GD}(v_c) > 0$.
\end{proof} 

\vspace{1mm}

\begin{figure}[!t]
  \centering
  \begin{subfigure}[t]{0.38\textwidth}    
    \includegraphics[width=\textwidth]{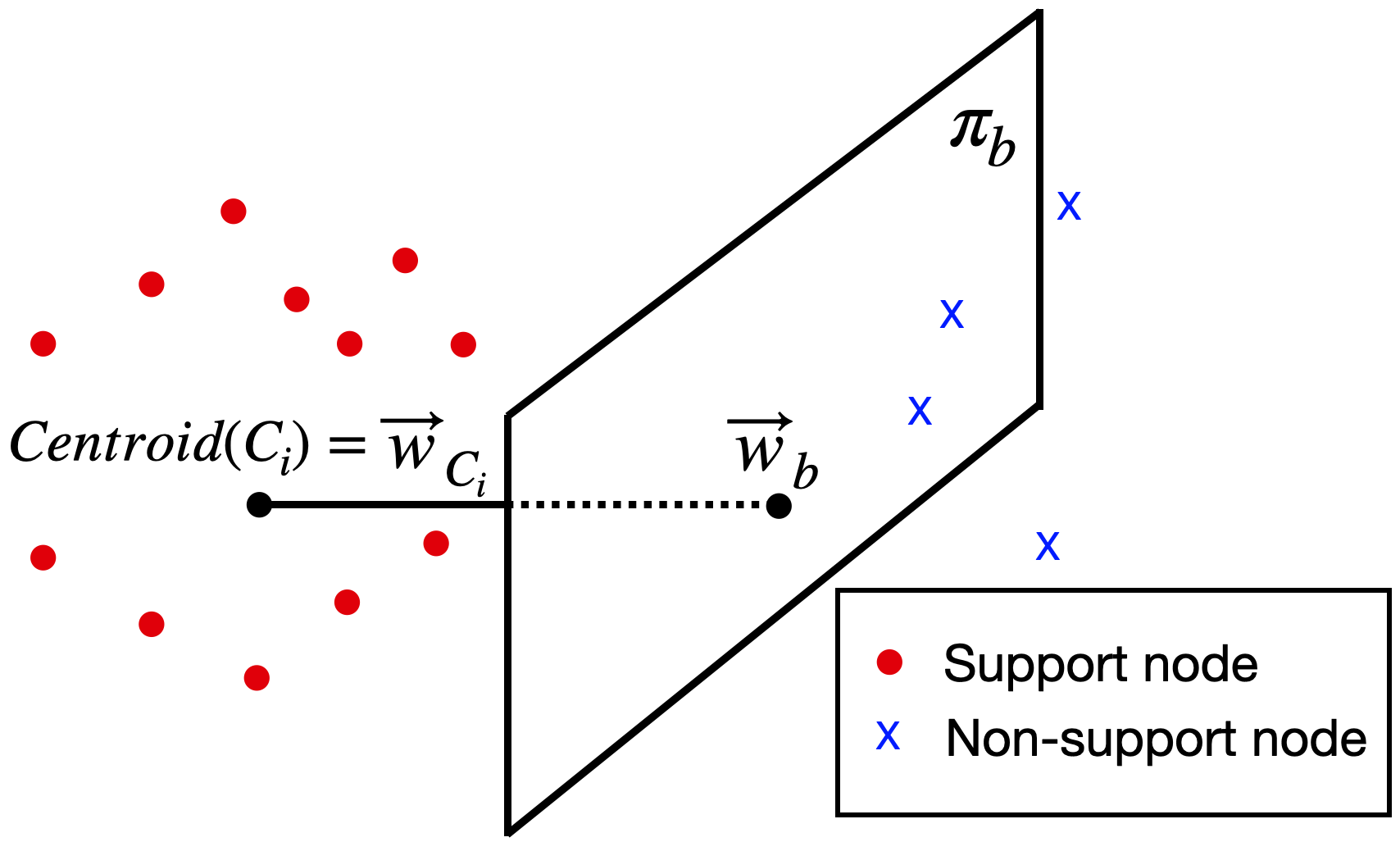}
	  \caption{Support nodes}    
	  \label{subfig:supp:notion}
  \end{subfigure}
  \begin{subfigure}[t]{0.38\textwidth}
    \includegraphics[width=\textwidth]{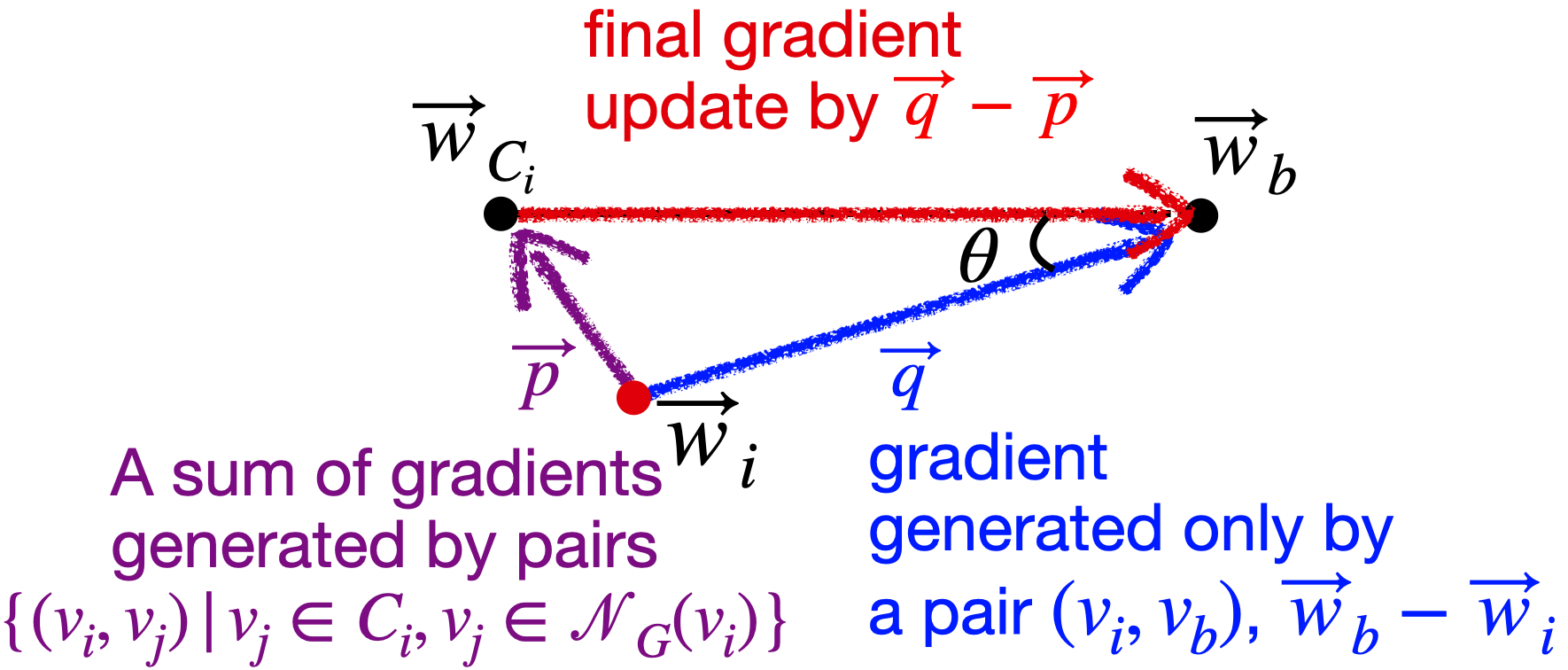}
      \caption{Gradient update in lemma \ref{lem:support:dweight}}    
    \label{subfig:supp:example}
  \end{subfigure}
    \vspace{-2mm}     
  \caption{Illustration of Lemma \ref{lem:support:dweight}. (Best viewed in color.)}    
    \vspace{-6mm}   
\end{figure}

\vspace{-2mm}
Figure \ref{subfig:supp:notion} shows an illustration of \textit{support nodes}, which are colored red.  The support node $v_i$ is placed near $Centroid(C_i)$ owing to the angular distance constraint. We note that the angular distance between $\vec{w}_b - \vec{w}_i$ and $\vec{w}_b-\vec{w}_{C_i}$ is $-90^{\circ}$ and $90^{\circ}$. For conceptual understanding, we can draw a plane (or boundary) $\pi_b$. Here $\pi_b$ is defined by all potential points $\vec{w}_k$ that make $cos(\vec{w}_b - \vec{w}_k, \vec{w}_b-\vec{w}_{C_i}) = 0$ (thus, their degree = $90^{\circ}$ or $-90^{\circ}$). As in Figure \ref{subfig:supp:notion}, $\pi_b$ from the criteria separates between support nodes and non-support nodes.

We expect that  support nodes provide more meaningful evidence to identify the bridge role of $\vec{w}_b$ for $C_i$ than non-support nodes, which could be potentially helpful to enhance GRAPH-GD. We now define the weighting method in GRAPH-wGD, which involves (1) the gradient update of $\vec{w}_i$ with $v_b$ and $v_i$ and (2) their difference vector, $\vec{w}_b-\vec{w}_i$, on the embedding space is used for calculating the weight values. The formal definition of directional weight is as follows.

\begin{definition}[Directional weight] Let $v_b$ be a bridge node and $v_i \in \mathcal{N}_G(v_b)$.  Here 
$\vec{w}_b$ and $\vec{w}_{i}$ are the embeddings of $v_b$ and $v_i$, respectively, and $-\frac { \partial \mathcal{L}_{\mathcal{M}}(v_b, v_i) }{ \partial \vec{w}_i}$ represents the gradient update of $\vec{w}_i$ using $v_b$ and $v_i$. The directional weight function is then defined as $h(v_b, v_i) = 1+ \overline{cos} \left(\vec{w}_b -  \vec{w}_{i},  -\frac { \partial \mathcal{L}_{\mathcal{M}}(v_b, v_i) }{ \partial \vec{w}_i}\right)$, where $\overline{cos}$ returns the cosine similarity if the value is positive and returns 0 otherwise. 
\label{def:dweight}
\end{definition}

\begin{lemma}
Let $v_b$ be a bridge node and $\text{B-Cluster}(v_b)$ be a set of clusters that are bridged by $v_b$. Assume that for any $C_i \!\subset\! \text{B-Cluster}(v_b)$, the sum of gradient updates of $\{(v_j,v_k)|\: \forall v_j, v_k \!\in\! \ C_i\}$ converges to the centroid $w_{C_i}$ (owing to dense edges among nodes in $C_i$). If $v_i$ is a support node for giving $v_b$ the bridging role for $C_i$, then $h(v_b, v_i) > 1$. If $v_i$ is a not support node for $C_i$ and $v_b$, then $h(v_b, v_i) = 1$.
\label{lem:support:dweight}
\end{lemma}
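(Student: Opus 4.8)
The plan is to reduce the directional weight $h(v_b,v_i)$ to the support-node criterion by showing that the vector inside $\overline{cos}$ shares the same direction as the reference vector $\vec{w}_b-\vec{w}_{C_i}$ used in Definition \ref{def:support}. Writing $g_i := -\frac{\partial \mathcal{L}_{\mathcal{M}}(v_b,v_i)}{\partial \vec{w}_i}$ for the gradient update, the whole argument hinges on one geometric identity, namely that $g_i$ is a \emph{positive} scalar multiple of $\vec{w}_b-\vec{w}_{C_i}$; everything after that is an immediate consequence of the definition of $\overline{cos}$ together with a two-case split on the sign of a cosine.

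First I would fix the bridged cluster $C_i\subset\text{B-Cluster}(v_b)$ and invoke the standing hypothesis that the aggregate of the intra-cluster gradient updates of $C_i$ drives its embeddings to the centroid $\vec{w}_{C_i}$. I read this as saying that, because of the dense edges inside $C_i$, the centroid $\vec{w}_{C_i}$ is the equilibrium (reference) position toward which the cluster's nodes are pulled. The single remaining force acting on $\vec{w}_i$ through the pair $(v_b,v_i)$ is the attractive update toward $v_b$: for negative sampling this update equals $(1-\sigma(\vec{w}_i^T\vec{w}_b))\,\vec{w}_b$, and for the hierarchical-softmax/distance model of Section~\ref{sec:rel:graphgd:bridge} it has magnitude $dist(\vec{w}_i,\vec{w}_b)>0$, so its coefficient is positive in either case. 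Measured against the cluster's reference point $\vec{w}_{C_i}$, this attraction is directed along $\vec{w}_b-\vec{w}_{C_i}$, giving $g_i=c\,(\vec{w}_b-\vec{w}_{C_i})$ for some scalar $c>0$.

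Given this identity the main computation collapses: since $c>0$ and cosine is invariant under positive scaling,
\[
cos\!\left(\vec{w}_b-\vec{w}_i,\ g_i\right)=cos\!\left(\vec{w}_b-\vec{w}_i,\ \vec{w}_b-\vec{w}_{C_i}\right),
\]
which is exactly the quantity whose sign defines a support node. I would then finish with the two cases built into $\overline{cos}$. If $v_i$ is a support node, the right-hand cosine is positive, so $\overline{cos}$ returns that positive value and $h(v_b,v_i)=1+\overline{cos}(\vec{w}_b-\vec{w}_i,\,g_i)>1$. If $v_i$ is not a support node, the right-hand cosine is $\le 0$, so $\overline{cos}$ returns $0$ and $h(v_b,v_i)=1$, establishing both claims.

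The main obstacle is the first step: $h$ is defined from the single-pair update $g_i$, whereas the support-node criterion is phrased through the cluster centroid $\vec{w}_{C_i}$, so the argument stands or falls on legitimately replacing the update's reference point by $\vec{w}_{C_i}$. This is precisely where the convergence hypothesis is required, and the cleanest way to make it rigorous is to adopt the centroid as the origin of the reference frame for $C_i$ (justified by the assumption): in that frame the attractive update $\propto\vec{w}_b$ becomes $\propto(\vec{w}_b-\vec{w}_{C_i})$, while the difference vector $\vec{w}_b-\vec{w}_i$ inside the cosine is translation-invariant and hence unaffected. I would also emphasize that the statement is an ``if/then'' pair rather than a strict equivalence, so I only need the \emph{sign} of the reduced cosine to track the support/non-support dichotomy, which the positivity of $c$ guarantees.
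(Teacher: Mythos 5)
Your proposal is correct and follows essentially the same route as the paper's proof: both use the centroid-convergence hypothesis to identify the effective gradient update with the direction $\vec{w}_b - \vec{w}_{C_i}$ (the paper phrases this as the pairwise pull $\vec{q}$ minus the intra-cluster aggregate $\vec{p}$, you phrase it as re-centering the reference frame at $\vec{w}_{C_i}$, which is the same substitution), and then both finish with the identical two-case sign analysis of $\overline{cos}$ against Definition \ref{def:support}. Your handling of the non-support case ($\cos \le 0$ rather than strictly negative) is in fact marginally more careful than the paper's.
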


\vspace{-2mm}
\begin{proof} As in Figure \ref{subfig:supp:example}, there exists a sum of gradients generated by pairs, $\{(v_i, v_j)|v_j \in C_i,v_j \in \mathcal{N}_{G}(v_i)\}$, and the gradient update using them becomes $\vec{w}_{C_i} - \vec{w}_{i}$ under the assumption of the dense edges among nodes in $C_i$. In other words, the update makes $\vec{w}_i$ move to the centroid of the cluster, which we call $\vec{p}$. Meanwhile, when the gradient generated only by a pair $(v_i, v_b)$ is $\vec{q}$, the final gradient update becomes $\vec{q} - \vec{p}$, which means that the gradient update from $(v_i, v_b)$ is subtracted by the gradients from all pairs from $v_i$. 
Thus, the final update is $\vec{q} - \vec{p} = \vec{w}_{b} - \vec{w}_{C_i}$. If $v_i$ is a support node, then the angular distance between $\vec{w}_b -  \vec{w}_{i}$ and $\vec{w}_{b} - \vec{w}_{C_i}$ is always between $-90^{\circ}$ and $90^{\circ}$. As a result, its cosine distance is larger than 0 and $h(v_b, v_i) > 1$. Similarly, if $v_i$ is not a support node for $C_i$ and $v_b$, its cosine distance is negative. Therefore, $h(v_b, v_i) = 1$.
\end{proof}

\vspace{-2mm}

\noindent Using Definition \ref{def:support}, Definition \ref{def:dweight}, and Lemma \ref{lem:support:dweight}, we can explain how the weight of GRAPH-wGD works. Again, notation ($W$, $\mathcal{N}_G$, and $ \mathcal{L}_{\mathcal{M}}$) in GRAPH-wGD is dropped for clarity.

\begin{lemma}
Let $v_b$ be a bridge node and $\text{B-Cluster}(v_b)$ be a set of clusters that are bridged by $v_b$.
Assume that, for any $C_i \!\subset\! \text{B-Cluster}(v_b)$, the sum of gradient updates of $\{(v_j,v_k)|\: \forall v_j, v_k \!\in\! \ C_i\}$ converges to the centroid $w_{C_i}$ (due to dense edges among the nodes in $C_i$). If $v_b$ is a bridge node and $\mathcal{N}_G(v_b)$ includes at least one support node, then GRAPH-wGD($v_b$) $>$ GRAPH-GD($v_b$).
\label{lem:support:rel:graphgd:graphwgd}
\end{lemma}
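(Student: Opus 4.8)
The plan is to compare $\text{GRAPH-wGD}(v_b)$ and $\text{GRAPH-GD}(v_b)$ summand by summand, exploiting the fact that both are means taken over \emph{the same} index set $\mathcal{N}_G(v_b)$, so the comparison reduces to comparing the individual terms. First I would abbreviate $g_i := \left|\frac{\partial \mathcal{L}_{\mathcal{M}}(v_b, v_i)}{\partial \vec{w}_i}\right|$ for each $v_i \in \mathcal{N}_G(v_b)$ and write out the two quantities explicitly, namely $\text{GRAPH-GD}(v_b) = \frac{1}{|\mathcal{N}_G(v_b)|}\sum_{v_i} g_i$ and $\text{GRAPH-wGD}(v_b) = \frac{1}{|\mathcal{N}_G(v_b)|}\sum_{v_i} h(v_b, v_i)\, g_i$, where $h(v_b, v_i) = 1 + \overline{cos}\!\left(\vec{w}_b - \vec{w}_i,\, -\frac{\partial \mathcal{L}_{\mathcal{M}}(v_b, v_i)}{\partial \vec{w}_i}\right)$ is the directional weight of Definition~\ref{def:dweight}.

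The key step is then to subtract the two expressions, whereupon the difference collapses to a single weighted sum of the excess weights:
\[
\text{GRAPH-wGD}(v_b) - \text{GRAPH-GD}(v_b) = \frac{1}{|\mathcal{N}_G(v_b)|}\sum_{v_i \in \mathcal{N}_G(v_b)} \big(h(v_b, v_i) - 1\big)\, g_i.
\]
Because $\overline{cos}$ is clipped at $0$ from below, each factor $h(v_b, v_i) - 1 = \overline{cos}(\cdots) \geq 0$, so every summand is nonnegative and the right-hand side is at least $0$; this already yields the weak inequality $\text{GRAPH-wGD}(v_b) \geq \text{GRAPH-GD}(v_b)$.

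To upgrade to a strict inequality I would invoke Lemma~\ref{lem:support:dweight}: since $\mathcal{N}_G(v_b)$ contains at least one support node $v_s$, and the present hypotheses (convergence of the intra-cluster gradient of $C_i$ to its centroid $\vec{w}_{C_i}$) are exactly those of that lemma, we obtain $h(v_b, v_s) > 1$, i.e.\ $h(v_b, v_s) - 1 > 0$. It then remains to argue that the corresponding gradient magnitude $g_s$ is \emph{strictly} positive, so that the isolated term $\big(h(v_b, v_s) - 1\big)\, g_s$ is strictly positive and forces the whole (nonnegative) average to be strictly positive. I would secure $g_s > 0$ from the identification $g_s = dist(\vec{w}_b, \vec{w}_s)$ adopted in Section~\ref{sec:rel:graphgd:bridge}, together with the observation that a support node $v_s$ is a distinct node lying on the far side of the separating plane $\pi_b$ from the centroid, so $\vec{w}_s \neq \vec{w}_b$ and its distance (hence gradient magnitude) cannot vanish.

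The main obstacle is precisely this strict-positivity of $g_s$: the weak inequality is essentially immediate from $\overline{cos} \geq 0$, but strictness hinges on ensuring that at least one support summand carries a genuinely nonzero gradient, which the weight $h$ alone does not guarantee and which must be drawn from the good-embedding / distance assumption of Definition~\ref{def:goodembedding}. Once $g_s > 0$ is established, the conclusion $\text{GRAPH-wGD}(v_b) > \text{GRAPH-GD}(v_b)$ follows directly, and this lemma then supplies the leftmost inequality in the chain proved by Theorem~\ref{theorem:graphwgd:bridge}.
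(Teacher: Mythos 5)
Your proof follows essentially the same route as the paper's: both express $\text{GRAPH-wGD}(v_b) - \text{GRAPH-GD}(v_b)$ as a mean of the nonnegative terms $\bigl(h(v_b,v_i)-1\bigr)\, g_i$ over $\mathcal{N}_G(v_b)$ and then invoke Lemma~\ref{lem:support:dweight} to conclude that the support-node term makes the mean strictly positive. The only difference is that you explicitly verify the strict positivity of the gradient magnitude $g_s$ at the support node (via the distance identification $g_s = dist(\vec{w}_b,\vec{w}_s)$), a point the paper's proof leaves implicit; this is a legitimate refinement rather than a different approach.
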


\begin{proof}[Proof of Lemma \ref{lem:support:rel:graphgd:graphwgd}] 
By assuming that $v_b$ is a bridge node and $\mathcal{N}_G(v_b)$ includes at least one support node,
\begin{equation*}
	\resizebox{0.8\hsize}{!}{%
	    $\text{GRAPH-wGD}(v_b) - \text{GRAPH-GD}(v_b)  = \text{MEAN}\left\{ \left|\frac { \partial \mathcal{L}_{\mathcal{M}}(v_b, v_i) }{ \partial \vec{w}_i} \right| \cdot (h(v_b, v_i)  - 1), v_i \in \mathcal{N}_G(v_b)\right\} > 0 $
}
\end{equation*}
from Lemma \ref{lem:support:dweight}.
Therefore, GRAPH-wGD($v_b$) $>$ GRAPH-GD($v_b$).
\end{proof} 

\noindent We can also see how GRAPH-GD compares with GRAPH-wGD with nodes that have no support node. Nodes with no support node mean that their context nodes are placed in the other direction against the centroid of the cluster connected. Thus, the node has no role in bridging with the corresponding cluster.

\begin{lemma}
For any $v_c$, when $\mathcal{N}_G(v_c)$ does not include support nodes, GRAPH-GD($v_c$) $=$ GRAPH-wGD($v_c$).
\label{lem:support:rel:graphgd:graphwgd:nosupport}
\end{lemma}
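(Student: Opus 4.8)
The plan is to treat this statement as an immediate corollary of Lemma \ref{lem:support:dweight}. First I would write GRAPH-GD and GRAPH-wGD side by side and observe that they are identical except that GRAPH-wGD multiplies each term $\left|\partial \mathcal{L}_{\mathcal{M}}(v_c,v_i)/\partial \vec{w}_i\right|$ by the directional weight $h(v_c,v_i)$ before averaging, so that GRAPH-GD is precisely the special case in which every weight equals $1$. It therefore suffices to show that $h(v_c,v_i)=1$ for every $v_i \in \mathcal{N}_G(v_c)$.

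Next I would invoke the classification established in Lemma \ref{lem:support:dweight}. Under the standing assumption that, for each bridged cluster $C_i$, the aggregate gradient update of the intra-cluster pairs pulls $\vec{w}_i$ toward the centroid $\vec{w}_{C_i}$, that lemma ties the directional weight to the support-node condition of Definition \ref{def:support}: when $v_i$ is a support node the relevant cosine is strictly positive and $h(v_c,v_i)>1$, whereas when $v_i$ is not a support node the cosine is non-positive, so the truncation $\overline{cos}$ returns $0$ and $h(v_c,v_i)=1+0=1$. By hypothesis $\mathcal{N}_G(v_c)$ contains no support nodes, so the second case applies to every neighbor and $h(v_c,v_i)=1$ uniformly.

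Substituting $h(v_c,v_i)=1$ into the GRAPH-wGD expression, the weighted mean collapses to the plain mean of the gradient magnitudes over $\mathcal{N}_G(v_c)$, which is exactly GRAPH-GD($v_c$). Hence GRAPH-GD($v_c$)$=$GRAPH-wGD($v_c$).

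The argument is essentially mechanical, so there is no genuine obstacle; the one point that deserves care is that the conclusion is an \emph{equality} rather than a mere inequality, and this hinges entirely on the truncation built into $\overline{cos}$, which clamps the contribution of every non-support neighbor to exactly $0$ instead of to a small negative value. Consequently I would state explicitly that the non-support branch of Lemma \ref{lem:support:dweight} yields $h=1$ on the nose, since it is this exactness that forces the two quantities to coincide rather than merely to be ordered.
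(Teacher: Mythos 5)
Your proposal is correct and follows essentially the same route as the paper: the paper's proof likewise invokes Lemma \ref{lem:support:dweight} to conclude that $h(v_c, v_i)=1$ for every $v_i \in \mathcal{N}_G(v_c)$ when no neighbor is a support node, and then observes that the weighted mean in GRAPH-wGD collapses to the plain mean defining GRAPH-GD. Your added emphasis on the truncation in $\overline{cos}$ being what yields exact equality (rather than an inequality) is a fair, if implicit, reading of the paper's argument and does not constitute a different approach.
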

\vspace{-2mm}

\begin{proof}[Proof of Lemma \ref{lem:support:rel:graphgd:graphwgd:nosupport}]

As in Lemma \ref{lem:support:dweight}, if there is no support node in $\mathcal{N}_G(v_c)$ for any $v_c$, then $h(v_c, v_i)=1$ for all $v_i \in \mathcal{N}_G(v_c)$. Therefore, GRAPH-GD($v_c$) $=$ GRAPH-wGD($v_c$).
\end{proof}

The following theorem shows that the directional weight of GRAPH-wGD helps in finding nodes that have higher \textit{bridgeness} in a larger gap than GRAPH-GD.

\vspace{2mm}
\noindent \textbf{Theorem \ref{theorem:graphwgd:bridge}.}
Let $G$ be a graph with a set of (arbitrary) clusters $C$ and associated embedding $W$. Let $v_b \in C_b$ be a bridge node with at least one support node in $\mathcal{N}_{G}(v_b)$, let $v_c \in C_b$ be a node with the same degree but lower \textit{bridgeness} than $v_b$, and let $\mathcal{N}_{G}(v_c)$ have no support node. Assume that $\text{B-Cluster}(v_b) \supseteq \text{B-Cluster}(v_c)$ and that $W$ is a {\em good embedding} as in Definition \ref{def:goodembedding}.
If, for any $C_i \!\subset\! \text{B-Cluster}(v_b)$, the sum of gradient updates of $\{(v_j,v_k)|\: \forall v_j, v_k \!\in\! \ C_i\}$ converges to the centroid $w_{C_i}$ (owing to dense edges among nodes in $C_i$), then GRAPH-wGD($v_b$) $>$ GRAPH-GD($v_b$) $>$ GRAPH-GD($v_c$) $=$ GRAPH-wGD($v_c$). 

\vspace{-2mm}

\begin{proof}

Using Lemma \ref{lem:support:rel:graphgd:graphwgd} and Lemma \ref{lemma:graphgd:bridge}, we can see that GRAPH-wGD($v_b$) $>$ GRAPH-GD($v_b$) and GRAPH-GD($v_b$) $>$ GRAPH-GD($v_c$), respectively. In addition, Lemma \ref{lem:support:rel:graphgd:graphwgd:nosupport} shows that GRAPH-wGD($v_c$) $=$ GRAPH-wGD($v_c$) under the assumption that context nodes of $v_c$ do not include support nodes. As a result, we can prove that GRAPH-wGD($v_b$) $>$ GRAPH-GD($v_b$) $>$ GRAPH-GD($v_c$) $=$ GRAPH-wGD($v_c$).
\end{proof}

\end{document}